\documentclass[preprint]{article}

\usepackage{icml2026/icml2026}
\usepackage{microtype}
\usepackage{graphicx}
\usepackage{subfigure}
\usepackage{booktabs}
\usepackage{hyperref}

\usepackage{mystyle}
\usepackage{makecell}

\newcommand{\EquiTriangle}[4][]{%
  \path
    let \p1 = (#2),
        \n1 = {#3},
        \n2 = {0.5*sqrt(3)*#3}
    in
    coordinate (#4A) at (\x1,          \y1)          
    coordinate (#4B) at (\x1+\n1,      \y1)          
    coordinate (#4C) at (\x1+0.5*\n1,  \y1+\n2);     
  \draw[#1] (#4A) -- (#4B) -- (#4C) -- cycle;
}
\usepackage{titlesec}
\titlespacing*{\paragraph}{0pt}{0mm plus 0mm minus 0mm}{2mm plus 0mm minus 0mm}
\titlespacing*{\section}{0pt}{0mm plus 0mm minus 0mm}{0mm plus 0mm minus 0mm}
\titlespacing*{\subsection}{0pt}{0mm plus 0mm minus 0mm}{0mm plus 0mm minus 0mm}


\icmltitlerunning{Deep Neural Networks as Iterated Function Systems and a Generalization Bound}

\begin{document}

\twocolumn[
\icmltitle{Deep Neural Networks as Iterated Function Systems and a Generalization Bound}
\icmlsetsymbol{equal}{*}
\begin{icmlauthorlist}
\icmlauthor{Jonathan Vacher}{equal,yyy}
\end{icmlauthorlist}
\icmlaffiliation{yyy}{Université Paris Cité, CNRS, MAP5, F-75006 Paris, France}
\icmlcorrespondingauthor{Jonathan Vacher}{jonathan.vacher@u-paris.fr}
\icmlkeywords{Transformers, ResNet, Stochastic Iterated Function Systems, Random Dynamical Systems, Generalization, Generative Models}
\vskip 0.3in
]

\printAffiliationsAndNotice{}

\begin{abstract}
Deep neural networks (DNNs) achieve remarkable performance on a wide range of tasks, yet their mathematical analysis remains fragmented: stability and generalization are typically studied in disparate frameworks and on a case-by-case basis. Architecturally, DNNs rely on the recursive application of parametrized functions, a mechanism that can be unstable and difficult to train, making \emph{stability} a primary concern. Even when training succeeds, there are few rigorous results on how well such models \emph{generalize} beyond the observed data, especially in the generative setting. In this work, we leverage the theory of stochastic Iterated Function Systems (IFS) and show that two important deep architectures can be viewed as, or canonically associated with, place-dependent IFS. This connection allows us to import results from random dynamical systems to (i) establish the existence and uniqueness of invariant measures under suitable contractivity assumptions, and (ii) derive a Wasserstein generalization bound for generative modeling. The bound naturally leads to a new training objective that directly controls the collage-type approximation error between the data distribution and its image under the learned transfer operator. We illustrate the theory on a controlled 2D example and empirically evaluate the proposed objective on standard image datasets (MNIST, CelebA, CIFAR-10).
\end{abstract}

\section{Introduction}

Deep neural networks (DNNs) are now the default modeling tool for high–dimensional prediction and generation, yet their theoretical understanding remains fragmented. Many successful architectures, \eg{} ResNets, Transformers, and Mixture-of-Experts (MoE) layers, are built from the repeated composition of parametrized maps, sometimes with stochastic routing. This recursive structure naturally invites a dynamical-systems viewpoint, but most existing analyses either linearize locally, pass to continuous-time limits, or focus on optimization dynamics (\eg{} stochastic gradient descent) rather than on the action of the trained network itself. As a result, we still lack a unified framework that simultaneously (i) captures the depth-wise dynamics of modern architectures, (ii) yields stability and convergence guarantees, and (iii) provides task-relevant generalization bounds, in particular for generative models.

\paragraph{Random Dynamical Systems} Iterated function systems (IFSs) offer a compact language for contractive dynamical systems whose attractors are fractal sets or invariant measures. Deterministic IFSs~\cite{BarnsleyHutchinson1989} yield existence and uniqueness of attractors via the Banach fixed point theorem and the associated ``collage'' principle, while stochastic variants \eg{} independent IFS, place-dependent IFS, and random matrix products extend these guarantees to Markovian compositions of maps~\cite{BarnsleyEltonGeronimo1988,DiaconisFreedman1999,Stenflo2003}. At the level of measures, contraction in Wasserstein distance leads to invariant laws and quantitative control of approximation error. Early neural-network work already hinted at this connection~\cite{stark1991neural,bressloff1991learning}, but the IFS viewpoint has remained largely underexploited in the context of modern deep architectures and large-scale generative modeling. We note two exceptions : one in which IFSs are used to analyze the dynamic of stochastic gradient descent~\cite{camuto2021fractal}; a second in which ``collage'' is used as an objective~\cite{poli2022selfsimilarity} without leveraging the stochastic IFS theory. 


\paragraph{Deep Learning and Generative Models} Modern deep generative models are dominated by deep architectures: diffusion and score-based models rely on iterated denoising networks~\cite{song2021scorebased}; GANs use deep discriminators and generators trained in adversarial games~\cite{goodfellow2014generative}; VAEs and normalizing flows combine deep encoders/decoders or invertible blocks~\cite{papamakarios2021normalizing}; and Transformers underpin autoregressive and masked-generation models~\cite{vaswani2017attention}. Despite impressive empirical performance, these families typically offer only partial guarantees: diffusion models trade sample quality for slow iterative sampling~\cite{karras2022elucidating}; GANs can be unstable and mode-collapse-prone~\cite{cobbinah2025stableGANs}; VAEs blur high frequencies~\cite{bredell2023explicitly}; normalizing flows restrict expressivity for tractable Jacobians~\cite{papamakarios2021normalizing}; and Transformers, even when their depth dynamics have been analyzed~\cite{geshkovski2023emergence,geshkovski2023mathematical}, come with no general guarantee that the empirical training distribution is close to an invariant attractor of the learned dynamics. More generally, standard architectural choices (residual connections, self-attention, normalization layers, drop-out etc.) are still largely guided by empirical observations rather than explicit stability criteria, which leads to ad-hoc design “recipes”. In contrast, several works show that explicit control of the Lipschitz constant or spectral norm is crucial for training stability~\cite{miyato2018spectral,delattre2023efficient}.

In this work, we revisit these questions through the lens of stochastic IFSs and random dynamical systems. At a qualitative level, this viewpoint turns the depth evolution of a network into a random dynamical system acting on measures, so that stability becomes contractivity of an associated transfer operator and generalization in generative modeling becomes the proximity between its invariant law and the data distribution. This perspective suggests a collage-type training principle: instead of matching a generic normal distribution to the empirical distribution using deep architecture (GANs, VAEs, normalizing flows, diffusion models/flow matching), one directly controls how far a single application of the learned operator moves the empirical distribution.

\paragraph{Contributions.}
Our contributions are threefold: (i) We provide a unified stochastic-IFS interpretation of several standard deep architectures by making explicit the branch maps and selector kernels of ResNets, Transformers, and Mixture-of-Experts layers, and we identify conditions under which their depth dynamics admit unique invariant measures and well-defined attractors. (ii) We derive a generalization bound for generative modeling in Wasserstein distance, showing that, under a contractivity assumption on the transfer operator, the distance between the model’s invariant law and the data distribution is controlled by an empirical “collage error” plus a purely statistical sampling term; this bound directly motivates a new training objective that penalizes the discrepancy between the empirical measure and its image under the learned operator. (iii) We instantiate this objective with spectrally-normalized MoE/I-IFS models, approximate the collage error with regularized optimal transport~\cite{peyre2019computational}, and show on both a controlled 2D example and on latent representations of MNIST, CelebA, and CIFAR-10 that minimizing the collage error yields meaningful attractors and promising qualitative results without diffusion steps or adversarial training\footnote{See online code: \url{https://github.com/xxx/xxx}}.

\paragraph{Notation.}
Throughout the manuscript, $(\Xx,d)$ denotes a complete separable metric space with Borel $\sigma$-algebra and $\Pp(\Xx)$ is the set of probability measures on $\Xx$ with second-order finite moments. The $K$-simplex is $\Delta^{K-1}$ and for any $p \in \Delta^{K-1}$, $\Cc(p)$ is the categorical distribution. Finally, we denote $d_H$ the Hausdorff distance and $\mathrm{W}_2$ the 2-Wasserstein distance. Definitions and proofs are given in supplementary Section~\ref{app:supp-def}.

\begin{figure}[t]
  \centering
    \begin{tikzpicture}[scale=1, >=latex, font=\small]
        \draw[step=1.0,white,thin] (-4,-1.3) grid (4,2.7); 
    
        \coordinate (A) at (-3.8,0);
        \EquiTriangle[thick,fill=black!30]{A}{2cm}{T}
        \node at (-2.8,2.2){$T_0$};

        \coordinate (B) at (-1,0);
        \coordinate (B1) at ($(B)$);
        \coordinate (B2) at ($(B) + (1,0)$);
        \coordinate (B3) at ($(B) + (0.5,{sqrt(3)/2})$);

        \EquiTriangle[thick,fill=black!30]{B1}{1cm}{T}
        \EquiTriangle[thick,fill=black!30]{B2}{1cm}{T}
        \EquiTriangle[thick,fill=black!30]{B3}{1cm}{T}
        \node at (0,2.2){$T_1 = \Hh(T_0)$};

        \coordinate (C1) at (1.8,0);
        \coordinate (C11) at ($(C1)$);
        \coordinate (C12) at ($(C1) + 0.5*(1,0)$);
        \coordinate (C13) at ($(C1) + 0.5*(0.5,{sqrt(3)/2})$);
        \EquiTriangle[thick,fill=black!30]{C11}{5mm}{T}
        \EquiTriangle[thick,fill=black!30]{C12}{5mm}{T}
        \EquiTriangle[thick,fill=black!30]{C13}{5mm}{T}

        \coordinate (C2) at (2.8,0);
        \coordinate (C21) at ($(C2)$);
        \coordinate (C22) at ($(C2) + 0.5*(1,0)$);
        \coordinate (C23) at ($(C2) + 0.5*(0.5,{sqrt(3)/2})$);
        \EquiTriangle[thick,fill=black!30]{C21}{5mm}{T}
        \EquiTriangle[thick,fill=black!30]{C22}{5mm}{T}
        \EquiTriangle[thick,fill=black!30]{C23}{5mm}{T}

        \coordinate (C3) at (2.3,0.866);
        \coordinate (C31) at ($(C3)$);
        \coordinate (C32) at ($(C3) + 0.5*(1,0)$);
        \coordinate (C33) at ($(C3) + 0.5*(0.5,{sqrt(3)/2})$);
        \EquiTriangle[thick,fill=black!30]{C31}{5mm}{T}
        \EquiTriangle[thick,fill=black!30]{C32}{5mm}{T}
        \EquiTriangle[thick,fill=black!30]{C33}{5mm}{T}
        \node at (2.8,2.2){$T_2 = \Hh(T_1)$};

        \node[align=left, text width=5cm] at (-1,-0.5) {Hutchinson iterates : $T_{n+1} = \Hh(T_n)$};
        \node[align=left, text width=5cm] at (-1,-1.0) {Attractor set : Sierpiński triangle};

        \draw (-2,-1.3)--(2,-1.3);
    \end{tikzpicture}

    \begin{tikzpicture}[scale=1, >=latex, font=\small]
        \draw[step=1.0,white,thin] (-4,-6) grid (4,3.5); 

        \node[right,align=left, text width=2.5cm] at (-2.7,3.25){Patches of image $I$};
        
        \node[right, align=left, text width=1cm] at (-3.2,2.8){source};
        \node[rotate=30] at (-2.7,1.6){\includegraphics[width=5mm]{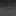}};
        \node[rotate=-15] at (-3.4,1.6){\includegraphics[width=5mm]{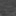}};
        \node[rotate=10] at (-2.9,2.2){\includegraphics[width=5mm]{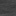}};
        \node[rotate=-20] at (-2.4,1.0){\includegraphics[width=5mm]{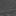}};
        \node[rotate=5] at (-3.1,1.0){\includegraphics[width=5mm]{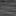}};
        \node[rotate=-5] at (-2.2,2.2){\includegraphics[width=5mm]{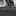}};

        \node[right,align=left, text width=1cm] at (-0.9,2.8){target};
        \node[rotate=-30] at (-0.3,1.6){\includegraphics[width=4mm]{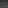}};
        \node[rotate=10] at (-1.0,1.6){\includegraphics[width=4mm]{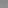}};
        \node[rotate=15] at (-0.9,2.2){\includegraphics[width=4mm]{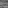}};
        \node[rotate=-10] at (0.0,1.0){\includegraphics[width=4mm]{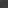}};
        \node[rotate=-5] at (-0.9,1.0){\includegraphics[width=4mm]{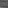}};
        \node[rotate=-10] at (-0.2,2.2){\includegraphics[width=4mm]{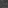}};

        \node[right,align=left, text width=3.4cm]  at (0.5,2.8){$\tau_{i\to j}$: translation of source patch $i$ to target $j$.};
        \node[right,align=left, text width=3.4cm]  at (0.5,1.4){$w_k$: simple affine function (rotation, flip, $\nicefrac{1}{2}$-patch scaling, gray level scaling/translation, etc.).};

        \node[right,align=left, text width=8cm]  at (-3.95,0.3){For all source patch $i$, find $(j_i,k_i)$ such that $\Hh(I)\simeq I$.};
        \node[right,align=left, text width=8cm]  at (-3.95,-0.2){Fixed point of $\Hh$ for IFS $\Ww=\{\tau_{i\to j_i} \circ w_{k_i}\}_i$ is close to $I$.};

        \node at (0,-3.5){\includegraphics[width=8cm]{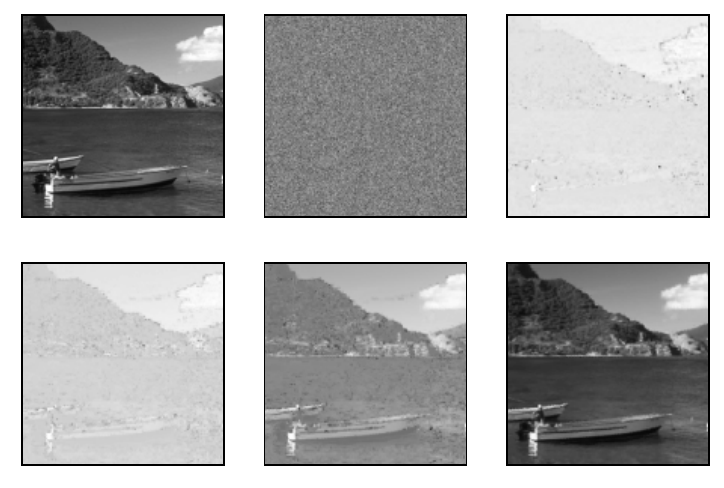}};
        \node at (-2.6,-0.7){$I$};
        \node at (0,-0.7){$I_0$};
        \node at (2.8,-0.7){$I_1 = \Hh(I_0)$};
        \node at (-2.5,-3.5){$I_2 = \Hh(I_1)$};
        \node at (0,-3.5){$I_4 = \Hh(I_3)$};
        \node at (2.8,-3.5){$I_{50} = \Hh(I_{49})$};

    \end{tikzpicture}

    \vspace{-3mm}
    \caption{Schematic illustration of fractal generation with IFSs. Top: classical fractal attractor construction. Bottom: fractal encoding of an image, where contractive maps learned from patch correspondences define an IFS whose invariant set approximates the target image.}
    \label{fig:ifs-fractal-both}
\end{figure}

\subsection*{Preliminary Result}

A linear operator $T: \Xx \to \Xx$ is $c$-contractive when its Lipschitz constant $c$ belong to $[0,1[$. In this case, the Banach fixed point theorem applies (see supplementary Section~\ref{app:supp-def}). A key corollary states that if one has a vector that is $\varepsilon$-invariant by $T$ then this vector is close enough to the fixed point of $T$. This formalizes as it follows.

\begin{cor}[``Collage'' in Metric Space]
    \label{cor:metric-collage}
    Let $T : \mathcal X \to \mathcal X$ a contraction with constant $c \in [0,1)$. Let $\tilde{x} \in \mathcal X$ be the (unique) fixed point of $T$, \ie{} $T\tilde{x} = \tilde{x}$.
    
    Then, for any $y \in \mathcal X$, we have the following implication
    \eql{
        d\bigl(Ty,y\bigr) \leq \varepsilon  \quad\Longrightarrow\quad   d(\tilde{x},y) \le \frac{\varepsilon}{1-c}.
        \label{eq:collage-ineq}
    }
\end{cor}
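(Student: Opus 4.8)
The plan is to prove this directly from the triangle inequality together with the contraction property, treating $Ty$ as a convenient intermediate point. The whole argument is essentially one line of estimation followed by an algebraic rearrangement, so I do not expect a genuine obstacle; the only things to keep track of are the fixed-point identity $T\tilde x = \tilde x$ (which I may invoke, since existence and uniqueness of $\tilde x$ are guaranteed by the Banach fixed point theorem assumed earlier) and the strict inequality $c < 1$ that makes the final division legitimate.

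First I would insert $Ty$ between $\tilde x$ and $y$ via the triangle inequality, writing $d(\tilde x, y) \le d(\tilde x, Ty) + d(Ty, y)$. The second term is already controlled by the hypothesis $d(Ty,y)\le\varepsilon$. For the first term, I would use that $\tilde x$ is fixed by $T$ to rewrite $d(\tilde x, Ty) = d(T\tilde x, Ty)$, and then apply the contraction bound $d(T\tilde x, Ty) \le c\, d(\tilde x, y)$. Combining these gives the self-referential estimate
\eql{
    d(\tilde x, y) \;\le\; c\, d(\tilde x, y) + \varepsilon.
}

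The final step is to isolate $d(\tilde x, y)$. Subtracting $c\,d(\tilde x,y)$ from both sides yields $(1-c)\,d(\tilde x, y) \le \varepsilon$, and since $c \in [0,1)$ we have $1 - c > 0$, so dividing gives the claimed bound $d(\tilde x, y) \le \varepsilon/(1-c)$. The one point worth flagging, rather than a difficulty, is that this manipulation is only valid because $d(\tilde x, y)$ is a finite real number to begin with; this is automatic here since $\tilde x, y \in \mathcal X$ and $d$ is a genuine metric, but in the later measure-valued applications (where $\mathcal X$ is replaced by a Wasserstein space) one must additionally ensure the relevant points lie in the space of finite second moments so that the distance is finite before rearranging.
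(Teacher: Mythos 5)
Your proof is correct and follows essentially the same route as the paper's: both insert $Ty$ as the intermediate point in the triangle inequality, use the fixed-point identity to bound $d(T\tilde x,Ty)\le c\,d(\tilde x,y)$, and rearrange using $1-c>0$. The remark about finiteness of the distance is a sensible extra precaution but not a divergence in method.
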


This result will be used along this manuscript for different metric spaces. It is the key result of fractal image compression~\cite{barnsley1996fractal}.

\section{Iterated Function Systems}
\label{sec:ifs}

\subsection{Deterministic IFS}
\label{sec:deter-ifs}

\begin{defn}[IFS]
    Let $\Ii=\{1,\dots,m\}$ be a finite index set and $\Ww=\{w_\xi:\mathcal{X}\to \mathcal{X}\}_{\xi\in\Ii}$ a finite family of functions. We call $\Ww$ an iterated function system (IFS).
    We say that an IFS is $c$-contractive when all its functions are at most $c$-contractive.
\end{defn}

Equip $\mathcal{K}(\mathcal{X})$, the set of nonempty compact subsets of $\mathcal{X}$, with the Hausdorff distance $d_H$. The space $(\Kk(\Xx), d_H)$ is a complete metric space~\cite{henrikson1999completeness}. Define the Hutchinson operator
\eq{
    \Hh(K) = \bigcup_{\xi\in\Ii} w_\xi(K).
}

\begin{thm}[Original Collage Theorem]
    \label{thm:collage}
    Let $\Ww$ be a $c$-contractive IFS. Then $\Hh$ is $c$-contractive on $(\mathcal{K}(\mathcal{X}),d_H)$ and Corollary~\ref{cor:metric-collage} applies.
\end{thm}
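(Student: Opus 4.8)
The plan is to establish that the Hutchinson operator $\Hh$ is a $c$-contraction on the complete metric space $(\Kk(\Xx),d_H)$ (completeness is already recorded in the excerpt) and then invoke Corollary~\ref{cor:metric-collage} verbatim with $T=\Hh$. It is convenient to work with the directed (one-sided) Hausdorff distance $e(A,B)=\sup_{a\in A}\inf_{b\in B}d(a,b)$, so that $d_H(A,B)=\max\{e(A,B),e(B,A)\}$. First I would note that $\Hh$ genuinely maps $\Kk(\Xx)$ into itself: each $w_\xi$ is continuous (being $c$-Lipschitz), the continuous image of a compact set is compact, and a finite union of compact sets is compact, so $\Hh(K)\in\Kk(\Xx)$ whenever $K\in\Kk(\Xx)$.

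The first key step is the single-map estimate: if $w$ is $c$-Lipschitz then $d_H(w(A),w(B))\le c\,d_H(A,B)$. This follows because for each $a\in A$ one has $\inf_{b\in B}d(w(a),w(b))\le c\inf_{b\in B}d(a,b)$; taking the supremum over $a\in A$ gives $e(w(A),w(B))\le c\,e(A,B)$, and the symmetric argument gives $e(w(B),w(A))\le c\,e(B,A)$, hence the claim.

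The second key step is the union estimate $d_H\bigl(\bigcup_\xi A_\xi,\bigcup_\xi B_\xi\bigr)\le \max_\xi d_H(A_\xi,B_\xi)$. For any $x\in\bigcup_\xi A_\xi$ there is an index $\xi_0$ with $x\in A_{\xi_0}$, and enlarging the target set can only decrease the infimum distance, so $\inf_{y\in\bigcup_\xi B_\xi}d(x,y)\le \inf_{y\in B_{\xi_0}}d(x,y)\le e(A_{\xi_0},B_{\xi_0})\le\max_\xi e(A_\xi,B_\xi)$; taking the supremum over $x$ and arguing symmetrically yields the bound.

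Combining the two steps with the choices $A_\xi=w_\xi(A)$ and $B_\xi=w_\xi(B)$ gives
\[
    d_H(\Hh(A),\Hh(B))\;\le\;\max_\xi d_H(w_\xi(A),w_\xi(B))\;\le\;c\,d_H(A,B),
\]
so $\Hh$ is a $c$-contraction on $(\Kk(\Xx),d_H)$ and Corollary~\ref{cor:metric-collage} applies. The only mildly delicate point is the union estimate, where one must carefully use that enlarging the target union can only decrease the infimum distance; everything else is a routine manipulation of suprema and infima.
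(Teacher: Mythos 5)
Your proof is correct. Note that the paper itself does not supply a proof of Theorem~\ref{thm:collage}: it is treated as a classical imported result (Hutchinson/Barnsley), so there is nothing in the appendix to compare against. Your argument is the canonical one — the two lemmas you isolate, namely $d_H(w(A),w(B))\le c\,d_H(A,B)$ for a single $c$-Lipschitz map and $d_H\bigl(\bigcup_\xi A_\xi,\bigcup_\xi B_\xi\bigr)\le\max_\xi d_H(A_\xi,B_\xi)$ for finite unions, are exactly the standard decomposition, and your handling of the directed distances $e(\cdot,\cdot)$ and of the well-definedness of $\Hh$ on $\Kk(\Xx)$ (continuity of the $w_\xi$, compactness of finite unions of continuous images) is sound. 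The only implicit ingredient you rely on, completeness of $(\Kk(\Xx),d_H)$ so that Corollary~\ref{cor:metric-collage} (which rests on the Banach fixed point theorem) applies, is already recorded in the paper, so the invocation is legitimate.
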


Theorem~\ref{thm:collage} is the original ``collage'' theorem. It states that if a compact set $K$ is $\varepsilon$-invariant ($d_H(K,\Hh(K))\le \varepsilon$), then it is close enough to the fixed point $\tilde{K}$ of the Hutchinson operator ($d_H(K,\tilde{K})\le \varepsilon/(1-c)$). The fixed point can be recovered using the recursive iteration of the operator. Classical applications of this result include line contractions, the Sierpiński gasket (Figure~\ref{fig:ifs-fractal-both}-top), and fractal image compression where the maps encode coarse-to-fine patch mappings~\cite{barnsley1996fractal} (Figure~\ref{fig:ifs-fractal-both}-bottom). More than that, IFSs are widely studied in the field of random dynamical systems.

\subsection{Stochastic IFSs}
\label{sec:stochastic-ifs}

Stochastic IFSs generalize the deterministic application of an operator depending on a family of functions by defining a recursion with a Markovian function selection. The most general type of stochastic IFSs is when the probability of choosing one function depends on the previous iterate. This defines place-dependent IFSs. A simpler case of stochastic IFS is when the probability of choosing one function does not depend on the previous iterate. This defines independent IFSs. 
\begin{defn}[Stochastic IFS]
    \label{def:place-ifs}
    Let $\Ww=\{w_\xi: \Xx \longrightarrow \Xx \}_{\xi\in\Ii}$ be an IFS and $p: \Xx \to \Delta^{|\Ii|-1}$ a selector kernel. Then $(\Ww,p)$ is a \emph{place-dependent IFS} (P-IFS) iff its Markov recursion is
    \eq{
        X_{t+1}=w_{\Xi_t}(X_t), \qquad \Xi_t \mid X_t=x_t \sim \Cc(p(x_t)).
    }
    When the selector kernel $p$ does not depend on $x$ \ie{} for all $x \in \Xx$, 
    \eq{
        p(x) = q \in \Delta^{|\Ii|-1},
    }
    the pair $(\Ww,q)$ is called an Independent IFS (I-IFS).
\end{defn}

Both I- and P-IFSs induce a Markov operator on bounded functions and an adjoint operator on measures. These operators allow to make the connection with the Hutchinson operator defined in Section~\ref{sec:deter-ifs}.

\begin{defn}[Markov operator, transfer operator and invariant measure]
    \label{def:markov-op}
    For $f\in C_b(\mathcal{X})$ the Markov operator is
    \eqal{
        (T f)(x)
          &= \mathbb{E}_{\Xi|X=x\sim \Cc(p(x))}[f(w_\Xi(x))]\\
          &= \sum_{\xi\in\Ii} p_\xi(x)\, f(w_\xi(x)).
    }
    Its adjoint, the transfer operator is $T^\ast$ acting on $\mathcal{P}(\mathcal{X})$, satisfies
    \eqal{
        T^\ast\mu
          &= \mathbb{E}_{X\sim\mu,\, \Xi|X\sim \Cc(p(X))}
            \big[\delta_{w_\Xi(X)}\big] \\
          &= \sum_{\xi\in\Ii} (w_\xi)\# ( p_\xi \cdot \mu ),
        \label{eq:state-dependent-operator}
    }
    where $(w_\xi)\#\mu$ is the pushforward of $\mu$ by $w_\xi$ and $p_\xi\cdot\mu$ has
    density $p_\xi$ against $\mu$. An \emph{invariant measure} satisfies
    $T^\ast\tilde\mu=\tilde\mu$.
\end{defn}

Intuitively $T^* \mu$ represents the distribution of the IFS output after one step when the input is drawn from $\mu$.

An important question related to stochastic IFSs is whether their transfer operator admit an invariant measure. Several theorems exist for I-IFS and P-IFS~\cite{BarnsleyEltonGeronimo1988,stenflo2002uniqueness,Stenflo2003}. The hypotheses of these theorems are generally based on the contractive nature of the functions as presented in the theorem below.

\begin{thm}[Invariant measure for P-IFS]
    \label{thm:invariant-measure-pifs}
    Let $(\Ww,p)$ be a P-IFS whose functions $(w_\xi)$ are Lipschitz with constants $(c_\xi)$.
    If $\sup_x \sum_{\xi\in\Ii} p_\xi(x) c_\xi^2 < 1$ (average contraction), then there exists a unique invariant measure $\tilde{\mu}$ for $T^\ast$, and the associated Markov recursion converges in law to $\tilde{\mu}$
\end{thm}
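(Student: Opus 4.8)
The plan is to exhibit $T^\ast$ as a strict contraction on the complete metric space $(\Pp(\Xx),\mathrm{W}_2)$ and then apply the Banach fixed point theorem, exactly in the form of Corollary~\ref{cor:metric-collage} but with $\Xx$ replaced by the Wasserstein space. Two standing facts make this legitimate: first, since $\Xx$ is complete and separable, the set $\Pp(\Xx)$ of measures with finite second moment is itself complete under $\mathrm{W}_2$; second, $\mathrm{W}_2$-convergence implies weak convergence, so a fixed point reached by iteration will automatically be the limit in law of the Markov recursion. Thus the whole theorem reduces to producing a constant $\lambda\in[0,1)$ with $\mathrm{W}_2(T^\ast\mu,T^\ast\nu)\le\lambda\,\mathrm{W}_2(\mu,\nu)$; the natural candidate is $\lambda=\sqrt{\sup_x\sum_\xi p_\xi(x)c_\xi^2}$, which is $<1$ precisely by the average-contraction hypothesis, and the square on $c_\xi$ is tuned to the quadratic Wasserstein cost.

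First I would establish this contraction by a synchronous coupling. Fix an optimal $\mathrm{W}_2$-coupling $\gamma$ of $(\mu,\nu)$, draw $(X,Y)\sim\gamma$, select a common branch index, and push forward through the maps to produce a coupling of $(T^\ast\mu,T^\ast\nu)$. Conditioning on $(X,Y)$ and on $\xi\sim p$, the $c_\xi$-Lipschitz property of each $w_\xi$ gives
\eq{
  \mathbb{E}\bigl[d(w_\xi(X),w_\xi(Y))^2 \mid X,Y\bigr]
  \;\le\; \Bigl(\sum_{\xi\in\Ii} p_\xi(X)\, c_\xi^2\Bigr)\, d(X,Y)^2
  \;\le\; \lambda^2\, d(X,Y)^2.
}
Averaging over $\gamma$ and bounding $\mathrm{W}_2^2(T^\ast\mu,T^\ast\nu)$ by the cost of this coupling yields $\mathrm{W}_2^2(T^\ast\mu,T^\ast\nu)\le\lambda^2\,\mathrm{W}_2^2(\mu,\nu)$. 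For an independent IFS ($p\equiv q$) this argument is complete, since drawing $\xi\sim q$ independently of $(X,Y)$ and using the same $\xi$ on both coordinates is a legitimate coupling with the correct marginals.

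The hard part is genuinely the place-dependence of $p$. When $p(X)\neq p(Y)$ one cannot force the two selectors to agree, so the step ``use a common $\xi$'' above is not available verbatim; the honest remedy is to couple the categorical laws $\Cc(p(X))$ and $\Cc(p(Y))$ maximally, so that they coincide with probability $1-\tfrac12\|p(X)-p(Y)\|_1$. On the agreement event the estimate above applies, but the disagreement event injects a residual term proportional to $\|p(X)-p(Y)\|_1$ that is only linear in $d(X,Y)$ and therefore does not, by itself, close a one-step contraction. Controlling it is exactly where regularity of the selector kernel must enter: under a Lipschitz or Dini condition on $p$ (together with the average contraction), the mismatch is absorbed either by passing to the dual action of $T$ on Lipschitz test functions and extracting a spectral gap, or by iterating to a contractive $n$-step operator, following the Barnsley--Elton--Geronimo and Stenflo analyses~\cite{BarnsleyEltonGeronimo1988,Stenflo2003}. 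Once a (possibly $n$-step) $\lambda$-contraction is in hand, existence and uniqueness of $\tilde\mu$ and geometric convergence in $\mathrm{W}_2$ --- hence in law --- follow immediately from the fixed point theorem of Corollary~\ref{cor:metric-collage}.
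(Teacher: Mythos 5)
The paper does not actually prove this theorem: its ``proof'' is a one-line pointer to \citet{stenflo2002uniqueness}, Section~4. Your proposal therefore does strictly more work than the paper, and its content is sound. The synchronous-coupling argument you give for the independent case is complete and is essentially the same computation the paper carries out later for Theorem~\ref{thm:stochastic-collage} (there phrased via convexity of $\mathrm{W}_2^2$ rather than an explicit coupling); your constant $\lambda=\bigl(\sup_x\sum_\xi p_\xi(x)c_\xi^2\bigr)^{1/2}$ is the right one. More importantly, you correctly diagnose that for genuinely place-dependent $p$ the ``use a common $\xi$'' step is unavailable, that a maximal coupling of $\Cc(p(X))$ and $\Cc(p(Y))$ leaves a residual term controlled only by $\|p(X)-p(Y)\|_1$, and that closing the argument requires a regularity hypothesis on the selector kernel (Lipschitz or Dini continuity, with $p_\xi$ bounded away from zero in the classical Barnsley--Elton--Geronimo setting). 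This is a real observation: the theorem as stated in the paper omits any such hypothesis, even though the paper implicitly concedes the point two paragraphs later (``a Lipschitz condition on the selector kernel itself is required'') and again in Proposition~\ref{prop:inherit-strong-average}. Where you and the paper converge is at the genuinely hard step --- the $n$-step contraction or dual spectral-gap argument for place-dependent probabilities --- which you both delegate to \citet{BarnsleyEltonGeronimo1988} and Stenflo. So your proposal is not a self-contained proof of the stated theorem, but neither is the paper's; yours has the merit of isolating exactly which hypothesis is missing and exactly which step the citation is carrying.
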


As mentioned earlier, the invariant measure of the transfer operator can be related to the fixed point of the Hutchinson operator as being its support.

\begin{prop}[Support of invariant measure and Hutchinson fixed-point]
    \label{prop:support-invariant-measure-hutchinson}
    Let $(\Ww,q)$ be a $c$-contractive I-IFS with all $q_\xi>0$. Let $\tilde{\mu}$ be the (unique) invariant measure of the transfer operator $T^\ast$ and let $\tilde{K} $ be the fixed point of the Hutchinson set operator $\Hh$. Then, $\tilde{K} = \operatorname{supp}(\tilde{\mu})$.
\end{prop}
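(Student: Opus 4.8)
The plan is to show that $S := \operatorname{supp}(\tilde\mu)$ is a nonempty compact fixed point of the Hutchinson operator $\Hh$, and then to conclude by the uniqueness statement of Theorem~\ref{thm:collage}. The argument naturally splits into three steps: a support-transport identity for $T^\ast$, a forward-invariance/compactness argument identifying where $\tilde\mu$ lives, and the final appeal to uniqueness.

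First I would establish the key identity that for any $\mu \in \Pp(\Xx)$ with compact support, $\operatorname{supp}(T^\ast\mu) = \Hh(\operatorname{supp}(\mu))$. Writing $T^\ast\mu = \sum_{\xi} q_\xi\,(w_\xi)\#\mu$, I use two standard facts valid on a separable metric space: (i) the support of a finite positive combination of measures is the closure of the union of their supports, so that since all $q_\xi>0$ one gets $\operatorname{supp}(T^\ast\mu) = \overline{\bigcup_\xi \operatorname{supp}((w_\xi)\#\mu)}$; and (ii) for continuous $w_\xi$, $\operatorname{supp}((w_\xi)\#\mu) = \overline{w_\xi(\operatorname{supp}\mu)}$. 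Because each $w_\xi$ is a contraction (hence continuous) and $\operatorname{supp}\mu$ is compact, $w_\xi(\operatorname{supp}\mu)$ is compact and the finite union is again compact, so both closures are superfluous and $\operatorname{supp}(T^\ast\mu) = \bigcup_\xi w_\xi(\operatorname{supp}\mu) = \Hh(\operatorname{supp}\mu)$. Here the hypothesis $q_\xi>0$ is exactly what guarantees that every branch contributes to the support.

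Second I would check that $S$ is nonempty and compact, so that $S \in \Kk(\Xx)$ and the uniqueness of the Hutchinson fixed point can be invoked. Nonemptiness is automatic since $\tilde\mu$ is a probability measure. For compactness, let $\tilde K$ be the (compact) fixed point of $\Hh$. Since $\Hh(\tilde K)=\tilde K$ forces $w_\xi(\tilde K)\subseteq\tilde K$ for every $\xi$, the set $\tilde K$ is forward-invariant, so any measure supported in $\tilde K$ is mapped by $T^\ast$ to a measure supported in $\tilde K$. Starting the Markov recursion from a point of $\tilde K$ and using the convergence in law to $\tilde\mu$ from Theorem~\ref{thm:invariant-measure-pifs}, together with the portmanteau inequality $\limsup_n \mu_n(\tilde K)\le \tilde\mu(\tilde K)$ for the closed set $\tilde K$, I conclude $\tilde\mu(\tilde K)=1$, hence $S\subseteq\tilde K$. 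As a closed subset of a compact set, $S$ is compact.

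Finally, applying the identity of the first step to the invariant measure yields $S = \operatorname{supp}(\tilde\mu) = \operatorname{supp}(T^\ast\tilde\mu) = \Hh(S)$, so $S$ is a fixed point of $\Hh$ lying in $\Kk(\Xx)$. By Theorem~\ref{thm:collage}, $\Hh$ is a $c$-contraction on the complete space $(\Kk(\Xx),d_H)$ and therefore has a \emph{unique} fixed point, namely $\tilde K$; hence $S=\tilde K$, as claimed. I expect the main obstacle to be the second step: carefully justifying that $\operatorname{supp}(\tilde\mu)$ is compact is essential, because the uniqueness of the Hutchinson fixed point holds only within $\Kk(\Xx)$. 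The forward-invariance of $\tilde K$ combined with the weak-convergence/portmanteau argument is the cleanest route, but one must ensure the invariant measure genuinely places full mass on $\tilde K$ rather than leaking to infinity.
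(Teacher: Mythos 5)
Your proof is correct and its main line coincides with the paper's: both establish the support-transport identity $\operatorname{supp}(T^\ast\mu)=\overline{\bigcup_\xi w_\xi(\operatorname{supp}\mu)}$ using that the support of a positive finite combination is the closure of the union of supports (where $q_\xi>0$ is used) and that $\operatorname{supp}((w_\xi)\#\mu)=\overline{w_\xi(\operatorname{supp}\mu)}$ for continuous $w_\xi$, then conclude that $\operatorname{supp}(\tilde\mu)$ is a fixed point of $\Hh$ and invoke uniqueness of the Hutchinson fixed point. The substantive difference is your second step: the paper asserts that $\operatorname{supp}(\tilde\mu)$ is ``a fixed point of the Hutchinson operator on compacts'' without ever verifying compactness (or nonemptiness), even though the uniqueness it appeals to holds only within $(\Kk(\Xx),d_H)$. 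Your forward-invariance argument --- $\Hh(\tilde K)=\tilde K$ implies $w_\xi(\tilde K)\subseteq\tilde K$ for all $\xi$, so a chain started in $\tilde K$ stays there, and convergence in law plus the portmanteau inequality for the closed set $\tilde K$ gives $\tilde\mu(\tilde K)=1$, hence $\operatorname{supp}(\tilde\mu)\subseteq\tilde K$ is compact --- genuinely patches this gap, and as a bonus already yields one inclusion of the desired equality before uniqueness is even invoked. So your write-up is the same proof strategy, executed more carefully; the only mild caveat is that your step-one claim that the closures are superfluous requires the compactness established in step two, so the logical order of presentation should put the compactness argument first when applying the identity to $\tilde\mu$ (which you do acknowledge).
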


Finally, in the context of measures we also have collage theorems. For I-IFS, contraction of the branch functions yields contraction of the transfer operator in Wasserstein distance~\cite{DiaconisFreedman1999,villani2008optimal}.

\begin{thm}[Collage Theorem on measure]
    \label{thm:stochastic-collage}
    Let $(\Ww,q)$ be a $c$-contractive I-IFS. Then, its transfer operator $T^\ast$ is contractive on $(\Pp(\Xx),\mathrm{W}_2)$ and Corollary~\ref{cor:metric-collage} applies.
\end{thm}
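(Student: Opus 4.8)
The plan is to prove directly that $T^\ast$ is a $c$-contraction on $(\Pp(\Xx),\mathrm{W}_2)$ and then invoke Corollary~\ref{cor:metric-collage}, which is applicable because $(\Pp(\Xx),\mathrm{W}_2)$ is itself a complete metric space~\cite{villani2008optimal}. Writing the transfer operator in its I-IFS form $T^\ast\mu=\sum_{\xi\in\Ii} q_\xi\,(w_\xi)\#\mu$, I would first note that $T^\ast$ maps $\Pp(\Xx)$ into itself: each $w_\xi$ is Lipschitz with constant at most $c$, hence of at most linear growth, so it preserves finite second moments, and a finite convex combination of second-moment measures stays in $\Pp(\Xx)$.

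The heart of the argument is a synchronous coupling. I would fix $\mu,\nu\in\Pp(\Xx)$ and take an optimal coupling $\gamma\in\Pi(\mu,\nu)$ for $\mathrm{W}_2$, so that $\int d(x,y)^2\,d\gamma(x,y)=\mathrm{W}_2(\mu,\nu)^2$. The key idea is to build a candidate coupling of $T^\ast\mu$ and $T^\ast\nu$ by drawing the \emph{same} index $\xi$ (with probability $q_\xi$) for both coordinates and pushing forward through $w_\xi\times w_\xi$:
\eq{
    \tilde\gamma=\sum_{\xi\in\Ii} q_\xi\,(w_\xi\times w_\xi)\#\,\gamma.
}
A short check on the marginals shows $\tilde\gamma\in\Pi(T^\ast\mu,T^\ast\nu)$: its first marginal is $\sum_\xi q_\xi\,(w_\xi)\#\mu=T^\ast\mu$ and its second is $\sum_\xi q_\xi\,(w_\xi)\#\nu=T^\ast\nu$.

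Using $\tilde\gamma$ as an admissible (not necessarily optimal) coupling then yields
\eqal{
    \mathrm{W}_2(T^\ast\mu,T^\ast\nu)^2
      &\le \int d(x',y')^2\,d\tilde\gamma(x',y') \\
      &= \sum_{\xi\in\Ii} q_\xi \int d\bigl(w_\xi(x),w_\xi(y)\bigr)^2\,d\gamma(x,y).
}
Since each $w_\xi$ is $c$-contractive, $d(w_\xi(x),w_\xi(y))^2\le c^2\,d(x,y)^2$, and using $\sum_\xi q_\xi=1$ the right-hand side is at most $c^2\,\mathrm{W}_2(\mu,\nu)^2$. Taking square roots gives $\mathrm{W}_2(T^\ast\mu,T^\ast\nu)\le c\,\mathrm{W}_2(\mu,\nu)$ with $c\in[0,1)$, i.e.\ $T^\ast$ is a contraction, and Corollary~\ref{cor:metric-collage} applies.

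The only genuinely delicate point—and the main obstacle—is the validity of the synchronous coupling, which is precisely where independence is essential. Because $q$ does not depend on the current state, the same randomized index can be fed to both copies while keeping each marginal equal to the correct mixture; for a place-dependent $p(x)$ the selection laws $p(x)$ and $p(y)$ differ, this exact cancellation breaks, and one can no longer control $\mathrm{W}_2(T^\ast\mu,T^\ast\nu)$ by the branch constants alone. This is why the statement is restricted to I-IFS. The remaining steps, namely the marginal verification and the moment/contraction estimate, are routine once one recalls that $(\Pp(\Xx),\mathrm{W}_2)$ is complete so that the fixed-point machinery underlying Corollary~\ref{cor:metric-collage} is available.
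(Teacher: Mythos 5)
Your proof is correct and follows essentially the same route as the paper's: both establish $\mathrm{W}_2^2(T^\ast\mu,T^\ast\nu)\le\sum_\xi q_\xi c^2\,\mathrm{W}_2^2(\mu,\nu)$, the paper by citing convexity of $\mathrm{W}_2^2$ and Lipschitz pushforward stability, you by unpacking both facts into a single explicit synchronous coupling $\tilde\gamma=\sum_\xi q_\xi\,(w_\xi\times w_\xi)\#\gamma$. Your closing remark on why the synchronous coupling breaks for place-dependent selectors also matches the paper's discussion following the theorem.
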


For general P-IFS, contractivity of each branch does not suffice to guarantee Wasserstein contraction of the operator. A Lipschitz condition on the selector kernel itself is required~\cite{meyn2012markov,villani2008optimal}. 

\paragraph{Consequence for generative modeling} This measure collage theorem has an important consequence for generative modeling. Given an empirical measure $\mu$, it means that if we are able to find a contractive transfer operator $T^\ast$ (or its corresponding stochastic IFS) such that $\mathrm{W}_2(T^\ast \mu, \mu) \leq \varepsilon$ then the attractor (or fixed-point) $\tilde{\mu}$ of $T^\ast$ will be close enough to $\mu$. We derive a bound on the generalization error in Section~\ref{sec:generalization}.

The deterministic Hutchinson operator, I-IFS and P-IFS are all versions of the same thing : the first applies all branches in parallel, the second averages over a fixed categorical draw, and the third selects branches in a state-dependent manner. Average contraction recovers the Banach fixed point intuition in each case. In the following section, we formalize how ResNet and Transformer architectures together with MoE fit in the IFS framework.

\section{Deep Neural Networks as IFS}

Each subsection below states a formal definition of the considered architecture, a formal definition of the function set and selector kernel pair $(\Ww,p)$ and a proposition establishing that it is a well-defined IFS. In this section, we assume that $\Xx=\RR^d$.

\subsection{ResNet as (degenerate) P-IFS}
\label{sec:resnet-pifs}

First, we consider the case of ResNet with ReLU activation function composed of the same residual block.

\begin{defn}[ReLU residual block]
    \label{def:residual-block}
    Let $m\in\NN$ be the width of a hidden layer, and let
    $A\in\RR^{m\times d}$, $b\in\RR^m$, $B\in\RR^{d\times m}$, $c\in\RR^d$ be parameters.
    With ReLU activation $\sigma(z)=\max\{z,0\}$ applied coordinate-wise, the residual
    block is the function $F_\theta:\Xx\to\Xx$ defined for all $x \in \Xx$ by
    \eq{
        F_\theta(x) = x + B\,\sigma(Ax + b) + c
    }
    where $\theta=(A,B,b,c)$.
\end{defn}

Iterations across depth are denoted $x^{(t+1)}=F_\theta(x^{(t)})$. Previous works have established this recursion as the Euler discretization of a so-called Neural ODE~\cite{chen2018neural} which allows to study theoretically the limit of infinitely deep neural networks using dynamical system theory. Here we keep the iterations discrete in depth. It is possible to consider a most general case where the parameter $\theta$ depends on depth but we prefer to keep it constant for simplicity. Indeed, considering the time dependence of $\theta$ only drastically increases the set of possible functions which stays finite in finite discrete time. In the following definition, given a ResNet we define an associated set of functions and a selector kernel.

\begin{defn}[Residual branch family and selector kernel]
    \label{def:residual-branch}
    Let $\Ii=\{0,1\}^m$ index activation patterns of the hidden layer.
    For $\xi=(\xi_1,\dots,\xi_m)\in\Ii$ define the diagonal matrix
    $D(\xi)=\mathrm{diag}(\xi)$ and the branch map $w_\xi^\theta:\Xx\to\Xx$ by
    \eq{
        w_\xi^\theta(x) = x + B\,D(\xi)(Ax + b) + c.
    }
    Set $\Ww_{\mathrm{ResNet}}=\{w_\xi^\theta\}_{\xi\in\Ii}$.
    For $x\in\Xx$, let $\xi(x)\in\Ii$ be the indicator of positive pre-activations, for all $j \in \{1,\dots,m\}$
    \eq{
        \xi_j(x)=\mathbf{1}\{(Ax+b)_j>0\}. 
    }
    We define the (degenerate) selector $p_{\mathrm{ResNet}}
    : x\in \Xx \mapsto p(x) \in \Delta^{|\Ii|-1}$ by
    \eq{
        p_{\xi}(x)= \delta_{\xi(x)}(\xi) =
          \begin{cases}
            1, & \text{if }\xi=\xi(x),\\[2pt]
            0, & \text{otherwise.}
          \end{cases}
    }
\end{defn}

\begin{prop}[ReLU residual block as place-dependent IFS]
    \label{prop:residual-ifs}
    The pair $(\Ww_{\mathrm{ResNet}},p_{\mathrm{ResNet}})$ defines a P-IFS on $\Xx$. The deterministic residual map $F_\theta$ is the following conditional expectation
    \eq{
        F_\theta(x)
          = \mathbb{E}_{\Xi | X=x \sim \Cc(p_{\mathrm{ResNet}}(x))}[w_\Xi^\theta(x)].
    }
\end{prop}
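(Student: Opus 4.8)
The plan is to unfold the definitions and reduce the whole statement to the defining identity of the ReLU. First I would verify that $(\Ww_{\mathrm{ResNet}},p_{\mathrm{ResNet}})$ satisfies the requirements of Definition~\ref{def:place-ifs}. The family $\Ww_{\mathrm{ResNet}}=\{w_\xi^\theta\}_{\xi\in\Ii}$ is indexed by the finite set $\Ii=\{0,1\}^m$ (of cardinality $2^m$), and each $w_\xi^\theta(x)=x+BD(\xi)(Ax+b)+c$ is an affine, hence well-defined, map $\RR^d\to\RR^d$, so $\Ww_{\mathrm{ResNet}}$ is a finite IFS. For the selector I would check that $p_{\mathrm{ResNet}}(x)$ lands in $\Delta^{|\Ii|-1}$ for every $x$: since $p_\xi(x)=\delta_{\xi(x)}(\xi)$ places unit mass on the single index $\xi(x)$ and zero elsewhere, its coordinates are nonnegative and sum to one. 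I would also note that $x\mapsto\xi(x)$ is constant on each polyhedral cell of the hyperplane arrangement $\{(Ax+b)_j=0\}_{j=1}^m$, so $p_{\mathrm{ResNet}}$ is Borel measurable and the Markov recursion of Definition~\ref{def:place-ifs} is well-posed.

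Next I would evaluate the conditional expectation. Because $p_{\mathrm{ResNet}}(x)$ is a point mass at $\xi(x)$, the categorical draw $\Xi\mid X=x\sim\Cc(p_{\mathrm{ResNet}}(x))$ is deterministic, and the expectation of Definition~\ref{def:markov-op} collapses to the single active branch: $\mathbb{E}_{\Xi\mid X=x}[w_\Xi^\theta(x)]=\sum_{\xi\in\Ii}\delta_{\xi(x)}(\xi)\,w_\xi^\theta(x)=w_{\xi(x)}^\theta(x)$. It then remains to identify $w_{\xi(x)}^\theta(x)$ with $F_\theta(x)$, which amounts to showing $B\,D(\xi(x))(Ax+b)=B\,\sigma(Ax+b)$.

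The crux is the pointwise gating identity. Writing $z=Ax+b\in\RR^m$, the $j$-th coordinate of $D(\xi(x))z$ is $\xi_j(x)\,z_j=\mathbf{1}\{z_j>0\}\,z_j$, while the $j$-th coordinate of $\sigma(z)$ is $\max\{z_j,0\}$. These agree for every real $z_j$: if $z_j>0$ both equal $z_j$, and if $z_j\le 0$ both equal $0$ (the case $z_j=0$ being consistent precisely because the gate uses the strict inequality $>0$). Hence $D(\xi(x))(Ax+b)=\sigma(Ax+b)$ coordinatewise, left-multiplying by $B$ yields the claimed equality, and therefore $w_{\xi(x)}^\theta(x)=F_\theta(x)=\mathbb{E}_{\Xi\mid X=x}[w_\Xi^\theta(x)]$.

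I do not expect a genuine obstacle: the proposition is essentially a bookkeeping unfolding of the definitions combined with the elementary identity $\mathbf{1}\{t>0\}\,t=\max\{t,0\}$. The only point deserving care is the boundary convention at $z_j=0$, where one must confirm that the strict-inequality indicator and the ReLU coincide; everything else follows immediately from affinity, finiteness of $\Ii$, and the degeneracy of the selector kernel.
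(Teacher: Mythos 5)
Your proof is correct and follows the only natural route: the paper itself dispatches this proposition with a one-line remark that the Markov recursion ``is easily verified'' to conform to Definition~\ref{def:place-ifs}, and your write-up simply carries out that verification in full, including the key pointwise identity $\mathbf{1}\{t>0\}\,t=\max\{t,0\}$ and the boundary case $t=0$. Nothing is missing and nothing diverges from the paper's intent.
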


The above construction is not limited to ReLU activation, see supplementary Section~\ref{app:softplus-ifs} for a construction with softplus activation. Given an activation function, it is in fact often possible to build an IFS and there are often multiple possible constructions.

The selector kernel $p_{\mathrm{ResNet}}$ depends on $x$ only through the sign pattern of $Ax+b$, that is, through the ReLU activation pattern at the first linear layer.  Each pattern $\xi$ selects a subset of the columns of the subsequent linear map $B$ and therefore determines a specific affine map $w_\xi$ at each layer.  In fact, the selection is deterministic, so that a ReLU residual block corresponds to a degenerate (Dirac) selector kernel.  This activation-pattern viewpoint has been used by~\citet{Mohan2020Robust} to show that a bias-free ReLU network is a positively homogeneous linear function of order~1. However, no further dynamical analysis was developed.

Within the present IFS framework we can go further.  A ReLU residual block is a piecewise-affine P-IFS whose stability is governed by the average contraction Theorem~\ref{thm:invariant-measure-pifs}. Hence, imposing average contractivity guarantees stability at infinite depth. When such networks are trained for denoising, clean images are approximate fixed points; hence the empirical distribution of the training set is approximately invariant under the associated transfer operator.  Since each branch map is affine (or linear in the bias-free case), the asymptotic behavior of the network as depth tends to infinity can be analyzed using the classical theory of products of random matrices~\cite{furstenberg1960products,bougerol1985products} and affine IFS~\cite{BarnsleyEltonGeronimo1988}. In the linear (positively homogeneous) case the attractor of the induced dynamics is constrained to be a single point, a cone, or a finite union of cones, in agreement with the observations of~\citet{Mohan2020Robust}.  In the affine case the attractor may be a nontrivial fractal set, which is fully compatible with the empirically observed tendency of biased networks to encode the scale of the noise level present during training.

\subsection{Transformers as P-IFS}
\label{sec:tf-pifs}

Second, we consider the case of the Transformer architecture. Transformers depth dynamics have been recently studied and our interpretation corroborates previous findings about convergence towards degenerate measures~\cite{geshkovski2023emergence,geshkovski2023mathematical}. Yet, our approach slightly contrasts as it is valid on the context space $\Xx^n$, not the token space $\Xx$. The case of Transformers is also more interesting than the ResNet as the selector kernel is not degenerated and is explicitly related to the attention weights.

\begin{defn}[Transformer block]
    \label{def:transformer-block}
    Let $n$ be the context length and $\bx=(x_1,\dots,x_n)\in\Xx^n$ a context.
    Let $\ell_{\mathrm{norm}}$ denote layer normalization.
    Queries, keys and values are
    \eq{
        q_i( \bx )=W_q x_i,\; k_j( \bx )=W_k x_j,\; v_j( \bx )=W_v x_j,
    }
    where $(W_q,W_k,W_v)$ are matrices. Attention weights are
    \eq{
        \alpha_{ij}(\bx)
          = \frac{\exp(q_i( \bx)^\top k_j( \bx))}
                 {\sum_{\ell=1}^n \exp(q_i( \bx)^\top k_\ell( \bx))}.
    }
    The attention map is
    \eq{
        f_\theta^{\mathrm{SA}}(\bx)_i
          = \sum_{j=1}^n \alpha_{ij}(\bx) v_j(\bx) ,
    }
    and a two-layer MLP is
    \eq{
        g_\theta( \bx)
          = W_2\,\sigma(W_1  \bx + b_1) + b_2,
    }
    with activation $\sigma$ (\eg{} $\mathrm{ReLU}$ $\mathrm{GELU}$ or $\tanh$) applied coordinate-wise. The residual transformer block is the function $G_\theta: \Xx^n \to \Xx^n$ defined by for all $\bx \in \Xx^n$ by
    \begin{equation}
        G_\theta(\bx)
          = \bx + f_\theta^{\mathrm{SA}}(\ell_{\mathrm{norm}}(\bx))
              + g_\theta(\ell_{\mathrm{norm}}(\bx)).
        \label{eq:transformer-block}
    \end{equation}
\end{defn}

In practice, there are additional affine embedding and projector matrices / vectors $(W_{\mathrm{in}},b_{\mathrm{in}})$ and $(W_{\mathrm{out}}, b_{\mathrm{out}})$, though we omit these as they are factoring other parameters. Similarly, we do not define Multi-Head attention as it only consists in concatenating multiple single head and projecting back to the desired output space.

Again, iterations across depth on the context space $\Xx^n$ are denoted $\bx^{(t+1)}=G_\theta(\bx^{(t)})$, we do not need to consider the continuous limit as it usually done to analyze the behavior at infinite depth~\cite{geshkovski2023mathematical}. We can now define the set of functions and the selector kernel to establish the link with IFS.

\begin{defn}[Transformer branch family and selector kernel]
    \label{def:transform-branch}
    Let $\Ii=\{1,\dots,n\}^n$.
    For $\xi=(\xi_1,\dots,\xi_n)\in\Ii$ define
    \eqal{
        \big(w^\theta_\xi(\bx)\big)_i
          = x_i &+ v_{\xi_i}(\ell_{\mathrm{norm}}(\bx))\\
                &+ g_\theta(\ell_{\mathrm{norm}}(\bx))_i ,
        \qquad i\in\{1,\dots,n\},
    }
    and set $\Ww_{\mathrm{TF}}=\{w_\xi^\theta\}_{\xi\in\Ii}$.
    We define the selector kernel $p_{\mathrm{TF}}
    : \bx \in \Xx^n \mapsto p(\bx) \in \Delta^{|\Ii|-1}$ by
    \eq{
        p_{\xi}(\bx)
          = \prod_{i=1}^n \alpha_{i,\xi_i}(\bx),
    }
    with attention weights $\alpha_{i,j}(\bx)$ from
    Definition~\ref{def:transformer-block}.
\end{defn}


The following proposition establishes definition~\ref{def:transform-branch} as a well-defined P-IFS.

\begin{prop}[Transformer as place-dependent random IFS]
    \label{prop:transformerIFS}
    The pair $(\Ww_{\mathrm{TF}}, p_{\mathrm{TF}})$ defines a
    P-IFS on $\Xx^n$. The deterministic Transformer block
    is the following conditional expectation
    \eq{
        G_\theta(\bx)
          = \mathbb{E}_{\Xi | \bX=\bx\sim \Cc(p_{\mathrm{TF}}(\bx))}
            \big[w^\theta_\Xi(\bx)\big].
    }
\end{prop}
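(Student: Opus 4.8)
The plan is to verify the two assertions in turn: first that $(\Ww_{\mathrm{TF}}, p_{\mathrm{TF}})$ meets the conditions of Definition~\ref{def:place-ifs} (with $\Xx$ there replaced by the complete separable space $\Xx^n$), and then that averaging the branch maps against $p_{\mathrm{TF}}$ reproduces the deterministic block $G_\theta$. Since $\Ii = \{1,\dots,n\}^n$ is finite and each $w_\xi^\theta$ visibly maps $\Xx^n$ into $\Xx^n$, the family $\Ww_{\mathrm{TF}}$ is a legitimate IFS; the only point requiring argument is that $p_{\mathrm{TF}}(\bx)$ lies in the simplex $\Delta^{|\Ii|-1}$ for every context $\bx$.

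To check this, I would note that each factor $\alpha_{i,\xi_i}(\bx)$ is nonnegative as a softmax output, so each weight $p_\xi(\bx)=\prod_{i=1}^n \alpha_{i,\xi_i}(\bx)$ is nonnegative. For the normalization I would invoke the tensor (product-of-sums) factorization: because $\xi$ ranges independently over each of its $n$ coordinates,
\[
\sum_{\xi \in \Ii} \prod_{i=1}^n \alpha_{i,\xi_i}(\bx)
  = \prod_{i=1}^n \Big(\sum_{j=1}^n \alpha_{ij}(\bx)\Big)
  = \prod_{i=1}^n 1 = 1,
\]
where each inner sum equals $1$ by the row-normalization of the softmax in Definition~\ref{def:transformer-block}. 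Hence $p_{\mathrm{TF}}$ is a valid selector kernel and $(\Ww_{\mathrm{TF}}, p_{\mathrm{TF}})$ is a P-IFS.

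For the conditional-expectation identity I would expand the average coordinate-wise, abbreviating $\mathbf{z} = \ell_{\mathrm{norm}}(\bx)$. Using $\big(w_\xi^\theta(\bx)\big)_i = x_i + v_{\xi_i}(\mathbf{z}) + g_\theta(\mathbf{z})_i$ together with $\sum_\xi p_\xi(\bx)=1$, the $x_i$ and $g_\theta(\mathbf{z})_i$ contributions factor out directly, leaving only the attention term $\sum_{\xi \in \Ii} p_\xi(\bx)\, v_{\xi_i}(\mathbf{z})$ to be identified with $f_\theta^{\mathrm{SA}}(\mathbf{z})_i$. The crux is that $v_{\xi_i}(\mathbf{z})$ depends on $\xi$ only through its $i$-th coordinate, so marginalizing out every other coordinate $\xi_{i'}$ collapses each of their sums $\sum_j \alpha_{i',j}(\bx)$ to $1$ and contracts the $n^n$-term sum down to $\sum_{j=1}^n \alpha_{ij}(\bx)\, v_j(\mathbf{z})$, which is exactly $f_\theta^{\mathrm{SA}}(\mathbf{z})_i$. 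Reassembling the three coordinate-wise pieces yields $G_\theta(\bx) = \bx + f_\theta^{\mathrm{SA}}(\mathbf{z}) + g_\theta(\mathbf{z})$, as required.

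The only genuinely delicate point is this last marginalization: one has to be careful that the product-form kernel factorizes per coordinate and that the partial marginals each contract to $1$ independently, since it is precisely this tensorization that turns the exponentially large branch average over $\Ii$ into the linear-in-$n$ attention map. Everything else is bookkeeping with the softmax normalization and the linearity of the conditional expectation.
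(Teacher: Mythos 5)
Your proof is correct and follows essentially the same route as the paper's: the paper's (much terser) argument simply observes that under the product kernel the coordinates $(\Xi_i)$ are conditionally independent with marginals $(\alpha_{i,\cdot}(\bx))$ and that taking the conditional expectation recovers $G_\theta$, which is exactly the tensorization and per-coordinate marginalization you spell out. Your version just makes explicit the normalization check and the collapse of the $n^n$-term sum that the paper leaves implicit.
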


Basically, this proposition says that a Transformer block can be viewed as randomly picking an attention pattern $\Xi$ according to the softmax weights and applying the corresponding update. In expectation this recovers the usual transformer output.

Contrary to the ResNet, the Transformer is not a degenerate IFS. In fact, the transformer block $G_\theta$ is only canonically associated to a distinct P-IFS as its conditional expectation. The associated P-IFS has a selector kernel defined by attention weights and non-linear branch functions $w^\theta_\xi$. The two-layer MLP $g_\theta$ is shared across all functions and only the linear part is selected ($v_{\xi_i}$). When the MLP is removed ($g_\theta=0$), the family $\{w^\theta_\xi\}$ reduces to an affine IFS and its asymptotic behavior can be characterized by the classical theory of products of random matrices~\citep{bougerol1985products} and affine iterated function systems~\citep{BarnsleyEltonGeronimo1988}. In this linear or affine setting, the attractor of the associated P-IFS is constrained to be a single point, a cone, a finite union of cones, or a fractal affine set. 

\begin{figure*}
    \centering
    \begin{tikzpicture}
        \draw[step=1.0,white,thin] (-8.5,-2.5) grid (8.5,2.7); 
        \node at (-5.5,0){\includegraphics[width=0.3\linewidth]{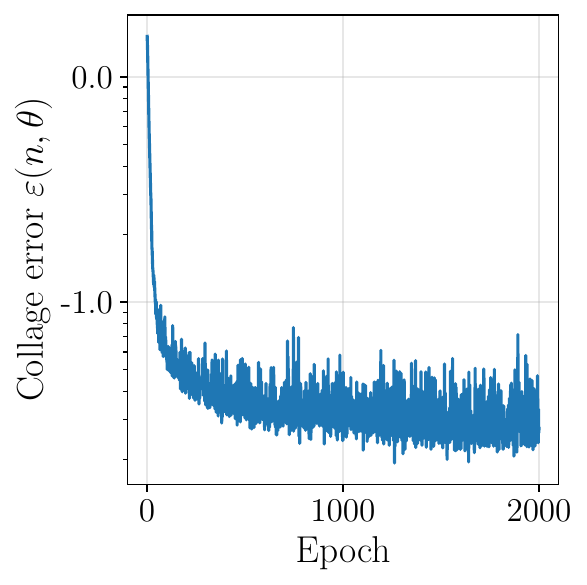}};
        \node at (0,0.15){\includegraphics[width=0.282\linewidth]{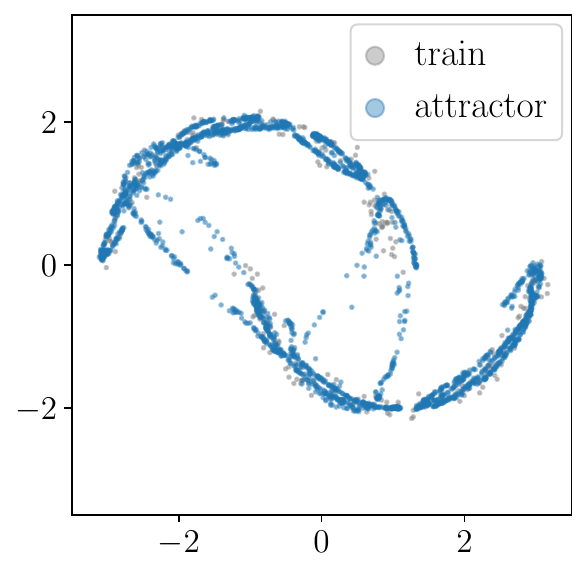}};
        \node at (5.5,0){\includegraphics[width=0.3\linewidth]{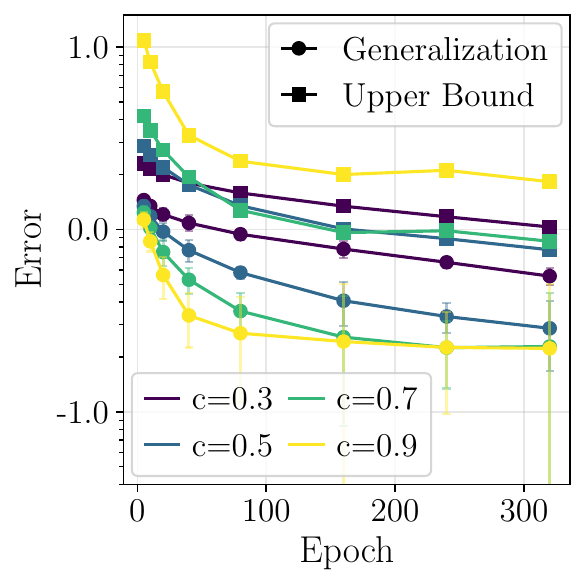}};
    \end{tikzpicture}    
    \vspace{-5mm}
    \caption{Left: Training collage error $\varepsilon(n,\theta)$ over epochs. Center: Training data and samples of the attractor measure of the trained MoE IFS. Right: Estimated generalization error and its bound (Theorem~\ref{thm:generalization-bound}) for different contraction constants.}
    \label{fig:two-moons-ifs}
\end{figure*}

\paragraph{Practical precedents of IFS-like sampling.} Stochastic or hard routing (\ie{} branch sampling) is already used to approximate dense attention: mixture-of-experts with sampled top--$k$ gating~\cite{shazeer2017moe,lepikhin2021gshard,fedus2022switch}, content-based sparse routing of keys~\cite{child2019sparse,roy2021routing}, hash-based neighbor sampling \cite{kitaev2020reformer}, and randomized feature estimators of softmax attention~\cite{choromanski2021performer}. These variants replace, conditionally on the input, the deterministic convex combination by sampled maps, matching the IFS formalism above.

Other deep learning architectures can be viewed as IFS and it might be interesting to complete an exhaustive classification. Though, now we would like to highlight that MoE can be canonically associated to an IFS.

\subsection{Mixture-of-Experts as P-IFS}

To complete we consider the case of MoE which have a very generic formulation. Using the right notations from the beginning will simplify the analysis.

\begin{defn}[Mixture-of-Experts]
    \label{def:moe-layer}
    Let $\Ii=\{1,\dots,K\}$ be an index set, and $\Theta$ a parameter space. Let $\theta \in \Theta$  and $\Ww_{\textrm{MoE}}=\{w_\xi^\theta\}_{\xi\in\Ii}$ be expert functions on $\Xx$. Let $p_{\mathrm{MoE}}=p^\theta:\Xx\to\Delta^{K-1}$ be a gating network. For all $ x\in\Xx$, the dense MoE function is
    \eq{
        H_\theta(x)=\sum_{\xi\in\Ii} p_\xi^\theta(x)\, w_\xi^\theta(x).
    }
\end{defn}

At this stage, it is clear that we have directly define the expectation of some P-IFS. This is made explicit in the following proposition.

\begin{prop}[MoE as place-dependent IFS]
    \label{prop:moe-pifs}
    The pair $(\Ww_{\mathrm{MoE}},p_{\mathrm{MoE}})$ defines a P-IFS on $\Xx$ with recursion
    \eq{
        x_{t+1}=w_{\Xi_t}^\theta(x_t), \qquad \Xi_t\mid X=x_t \sim \Cc(p^{\theta}(x_t)).
    }
    Moreover, the dense MoE output is the conditional expectation
    \eq{
        H_\theta(x)
          = \mathbb{E}_{\Xi | X=x \sim \Cc(p^{\theta}(x))}[w_\Xi^\theta(x)].
    }
\end{prop}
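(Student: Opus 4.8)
The plan is to verify the two claims directly against the definitions, since the MoE layer of Definition~\ref{def:moe-layer} is already written in the convex-combination form that the P-IFS framework expects; consequently the proposition amounts to matching notation rather than to substantive analysis. I would treat the structural claim (that $(\Ww_{\mathrm{MoE}},p_{\mathrm{MoE}})$ is a P-IFS with the stated recursion) and the conditional-expectation identity separately.

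First, for the structural claim, I would check that the pair satisfies the hypotheses of Definition~\ref{def:place-ifs}. The index set $\Ii=\{1,\dots,K\}$ is finite and each expert $w_\xi^\theta$ is a self-map of $\Xx$, so $\Ww_{\mathrm{MoE}}=\{w_\xi^\theta\}_{\xi\in\Ii}$ is an IFS in the sense of the IFS definition. The gating network $p_{\mathrm{MoE}}=p^\theta$ maps $\Xx$ into $\Delta^{K-1}=\Delta^{|\Ii|-1}$, so it is admissible as a selector kernel. With these two ingredients, Definition~\ref{def:place-ifs} directly furnishes the Markov recursion $x_{t+1}=w_{\Xi_t}^\theta(x_t)$ with $\Xi_t\mid X_t=x_t\sim\Cc(p^\theta(x_t))$, which is exactly the stated recursion; nothing further is required.

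Second, for the conditional-expectation identity, I would simply unwind the expectation over a categorical draw. If $\Xi\mid X=x\sim\Cc(p^\theta(x))$, then by definition of the categorical distribution $\Pr(\Xi=\xi\mid X=x)=p_\xi^\theta(x)$, while the vector $w_\xi^\theta(x)$ is deterministic once $x$ is fixed. Hence
\[
\mathbb{E}_{\Xi\mid X=x}[w_\Xi^\theta(x)]=\sum_{\xi\in\Ii}p_\xi^\theta(x)\,w_\xi^\theta(x),
\]
and the right-hand side is precisely $H_\theta(x)$ by Definition~\ref{def:moe-layer}. This closes the argument.

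The main point to watch --- rather than a genuine obstacle --- is the implicit regularity needed for the recursion to define a bona fide Markov kernel: one wants $x\mapsto p^\theta(x)$ and each $w_\xi^\theta$ to be (Borel) measurable so that the transfer operator $T^\ast$ of Definition~\ref{def:markov-op} is well-defined on $\Pp(\Xx)$. These conditions hold for any standard parametrization of experts and gating (continuous, in fact), so I would only flag them in passing. Compared with Propositions~\ref{prop:residual-ifs} and~\ref{prop:transformerIFS}, where the branch maps and selector had to be \emph{extracted} from ReLU activation patterns or attention weights and the collapse of the conditional expectation back to $F_\theta$ (resp.\ $G_\theta$) was the substantive step, here the MoE is defined in IFS-ready form, so the proposition is essentially a consistency check confirming that the P-IFS formalism subsumes a dense MoE layer as the expected one-step update of its associated branch family.
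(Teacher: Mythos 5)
Your proposal is correct and matches the paper's approach: the paper's proof is the one-line remark that the Markov recursion conforms to Definition~\ref{def:place-ifs}, and you simply spell out the verification (finite index set, self-maps, simplex-valued gating, and the categorical expectation collapsing to $H_\theta$) that this remark leaves implicit. Your measurability aside is a reasonable extra precaution but not something the paper addresses.
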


We can cascade MoE with different parameters at each layer as defined in the supplementary Section~\ref{app:deep-moe}. 

Supplementary Table~\ref{tab:ifs-architectures} summarize the IFS associated to ReLU-ResNet, Transformers and MoE.

\begin{figure*}
    \centering
    \begin{tikzpicture}
        \draw[step=1.0,white,thin] (-8.5,-2.5) grid (8.5,2.7); 
        \node at (-5.5,0){\includegraphics[width=0.282\linewidth]{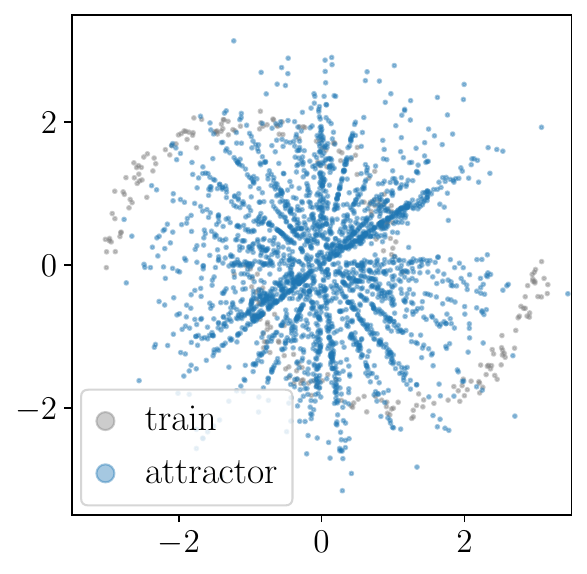}};
        \node at (0,0.0){\includegraphics[width=0.282\linewidth]{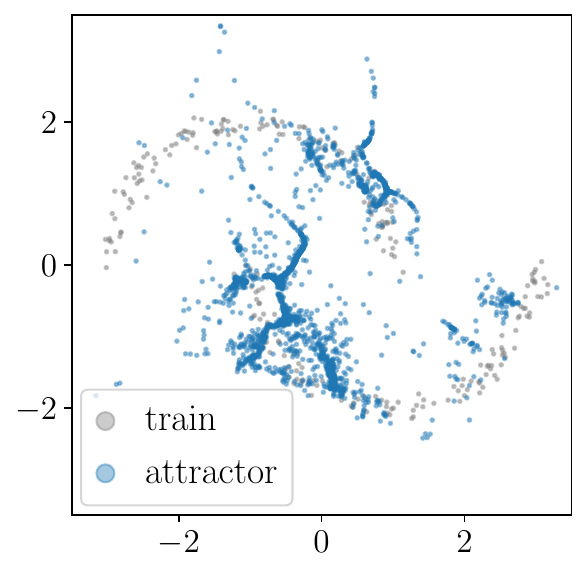}};
        \node at (5.5,0){\includegraphics[width=0.282\linewidth]{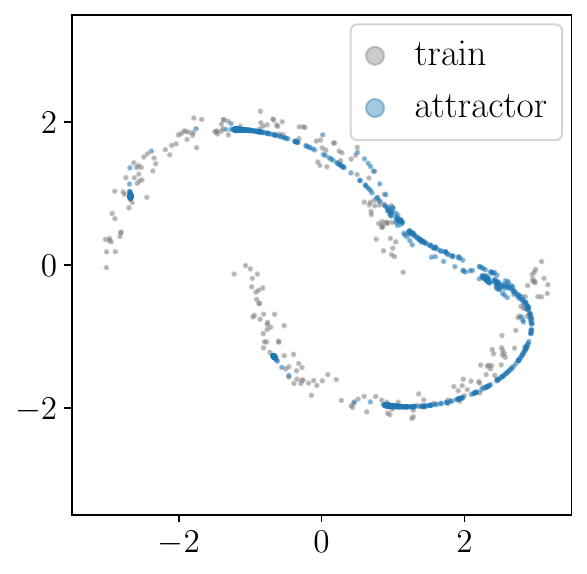}};
    \end{tikzpicture}    
    \vspace{-5mm}
    \caption{From left to right : attractors obtained with ResNet with zero bias, ResNet with learned bias and Transformer architectures trained with collage error minimization.}
    \label{fig:two-moons-resnet-tf}
\end{figure*}

\subsection{Consequences}

\paragraph{Convergence and Stability} Viewing ResNets, Transformers, and MoE layers as stochastic IFS enables the import of classical convergence and stability results: existence and uniqueness of invariant measures and characterization of attractors. For a P-IFS, the stochastic recursion $X_{t+1}=w_{\Xi_t}(X_t)$ and the deterministic update $x_{t+1}=F(x_t)$ with the conditional expected function $F(x)=\EE_{\Xi | X=x \sim \Cc(p(x))}[w_\Xi(x)]$ (the neural network) are in general distinct dynamical systems. Under the strong average Lipschitz condition of Theorem~\ref{thm:invariant-measure-pifs}, the function $F$ inherits contractivity and therefore has a unique global point attractor (see supplementary Section~\ref{app:avg-vs-expectation}). However, for linear/affine stochastic IFS, weaker Lyapunov-type ``average contractivity'' conditions (based on $\EE[\log c_\Xi]<0$)~\cite{furstenberg1960products,bougerol1985products} control only the stochastic recursion and do not guarantee stability of $F$. A simple one-dimensional example illustrating this distinction is provided in Appendix~\ref{app:avg-vs-expectation}. In the non-linear case, up to our knowledge there is no better hypotheses than the strong average Lipschitz condition. Lyapunov-type conditions can be imported to analyze gradients and to guarantee local stability of random dynamical systems~\cite{oseledets1968multiplicative,ruelle1979ergodic}.

\paragraph{Deep Learning in Practice} The IFS viewpoint brings interesting insight to the design and practice of deep learning. The IFS stability results require contractivity (either uniform or in the averaged sense of Theorem~\ref{thm:invariant-measure-pifs}), these conditions are too strong for deterministic networks and their depth dynamic would collapse to zero. Yet, some existing practices have come closer to this regime \eg{}, Lipschitz networks, spectral normalization~\cite{miyato2018spectral}, or OT entropy-regularized attention~\cite{sander2022sinkformers}. Sampling in deep neural networks is also a common practice as mentioned in Section~\ref{sec:tf-pifs} and noise injection~\cite{bishop1995training} is known to regularize. The main takeaway of the IFS viewpoint is that \emph{contractivity combined with sampling will guarantees stability while preserving expressivity}.


\section{Collage Error Bounds Generalization Error}
\label{sec:generalization}

\begin{defn}[Generalization, statistical and empirical collage error]
    \label{def:errors}
    Let $\mu$ be the data distribution on $\Xx$ and $\mu_n \eqdef \tfrac1n\sum_{i=1}^n \delta_{x_i}$ the empirical measure associated to samples $\{x_i\}_{i=1}^n \subset \Xx$. Let $T^\ast_\theta : \Pp(\Xx) \to \Pp(\Xx)$ a $c_\theta$-contractive operator on $(\Pp(\Xx), \mathrm{W}_2)$ parametrized by $\theta \in \Theta$ and let $\mu_\theta$ be its fixed point.
    
    The \emph{empirical collage error} is $\varepsilon(n,\theta) \eqdef \mathrm{W}_2\big( T^\ast_\theta{}\mu_n, \mu_n \big)$.
    
    The \emph{generalization error} is $\mathrm{W}_2(\mu_\theta,\mu)$.

    The \emph{statistical error} is $\mathrm{W}_2(\mu,\mu_n)$.

\end{defn}

\begin{thm}[Generalization bound]
    \label{thm:generalization-bound}
    Use the notation and hypotheses of Definition~\ref{def:errors}. Then, for any $(n,\theta)$ we have
    \eq{
        \mathrm{W}_2(\mu_\theta,\mu) \le \frac{\varepsilon(n,\theta)}{1-c_\theta}  + \mathrm{W}_2(\mu,\mu_n).
    }
\end{thm}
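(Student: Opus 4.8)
The plan is to reduce the statement to a single application of the metric-space collage Corollary~\ref{cor:metric-collage} in the Wasserstein space, combined with one triangle inequality. The generalization error $\mathrm{W}_2(\mu_\theta,\mu)$ is split through the empirical measure $\mu_n$, which plays the role of the $\varepsilon$-invariant point $y$ in Corollary~\ref{cor:metric-collage}, while $\mu_\theta$ plays the role of the fixed point $\tilde{x}$.

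First I would invoke the triangle inequality for the metric $\mathrm{W}_2$ on $\mathcal{P}(\mathcal{X})$:
\[
  \mathrm{W}_2(\mu_\theta,\mu) \le \mathrm{W}_2(\mu_\theta,\mu_n) + \mathrm{W}_2(\mu_n,\mu).
\]
The second term is exactly the statistical error of Definition~\ref{def:errors}, so it is already in the desired form and needs no further work. It remains only to control the first term $\mathrm{W}_2(\mu_\theta,\mu_n)$.

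Second, I would apply Corollary~\ref{cor:metric-collage} with the metric space taken to be $(\mathcal{P}(\mathcal{X}),\mathrm{W}_2)$ and the contraction taken to be $T^\ast_\theta$, which is $c_\theta$-contractive by hypothesis. The unique fixed point of the corollary is then $\mu_\theta$ (since $T^\ast_\theta\mu_\theta=\mu_\theta$), and the candidate point is the empirical measure $\mu_n$. By the very definition of the empirical collage error, $\mu_n$ is $\varepsilon(n,\theta)$-invariant, i.e. $\mathrm{W}_2(T^\ast_\theta\mu_n,\mu_n)=\varepsilon(n,\theta)$. The implication of Corollary~\ref{cor:metric-collage} then yields directly
\[
  \mathrm{W}_2(\mu_\theta,\mu_n) \le \frac{\varepsilon(n,\theta)}{1-c_\theta},
\]
and substituting this into the triangle inequality above gives the claimed bound.

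The argument is essentially mechanical once these pieces are identified, so there is no serious analytic obstacle; the only points deserving care are the hypotheses that license Corollary~\ref{cor:metric-collage}. One must ensure that $(\mathcal{P}(\mathcal{X}),\mathrm{W}_2)$ is a genuine complete metric space (so that the Banach fixed-point theorem underlying the corollary applies and $\mu_\theta$ exists and is unique), and that $\mu_n,\mu_\theta,\mu$ all lie in $\mathcal{P}(\mathcal{X})$, i.e. have finite second moments — immediate for $\mu_n$ and assumed for $\mu$. Both are guaranteed by the standing assumption that $(\mathcal{X},d)$ is complete and separable and that $\mathcal{P}(\mathcal{X})$ is the space of second-moment probability measures, so the proof goes through verbatim.
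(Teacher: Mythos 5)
Your proposal is correct and matches the paper's proof in essence: the paper uses the same triangle inequality through $\mu_n$ and then bounds $\mathrm{W}_2(\mu_\theta,\mu_n)$ by re-deriving the collage inequality inline (writing $\mathrm{W}_2(\mu_\theta,\mu_n)\le c_\theta\,\mathrm{W}_2(\mu_\theta,\mu_n)+\varepsilon(n,\theta)$ and rearranging), which is exactly the content of Corollary~\ref{cor:metric-collage} that you invoke directly. Your remark about checking that $(\Pp(\Xx),\mathrm{W}_2)$ is a complete metric space is a reasonable extra care point, but the route is the same.
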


The first term $\varepsilon(n,\theta)$ is the empirical collage error, an objective that can be minimized in practice. The second term $\mathrm{W}_2(\mu,\mu_n)$ is purely statistical and depends only on sampling fluctuations of the empirical measure; under mild regularity assumptions on $\mu$ (finite $q$-th moment, bounded support, or subgaussian tails) it converges to $0$ at classical empirical OT rates~\cite{fournier2015rate}. Thus, whenever $T_\theta^\ast$ is contractive, minimizing $\varepsilon(n,\theta)$ controls the Wasserstein distance $\mathrm{W}_2(\mu_\theta,\mu)$ between the model's invariant law and the true data distribution. This bound can justify heuristically the use of the collage error as an evaluation metric.

\section{Numerical Experiments}

The IFS view on neural networks brings several stability/convergence results and a generalization bound. The contractivity condition is somehow strong and there is a tradeoff as the generalization bound explodes for $c_\theta$ close to $1$. In addition, the new collage error minimization objective might not be useful in practice for high-dimensional datasets. To answer these questions, we first design a simple MoE/I-IFS following Proposition~\ref{prop:moe-pifs}. We use spectral normalization of the linear layers used to define the IFS functions to ensure contractivity. Contrary to a standard Transformer we do use branch sampling during the forward pass ($+$ additional noise). We train this MoE/I-IFS on a simple 2D dataset supported on two separated half-circles plus Gaussian noise (two-moons datasets). Second, we compare the attractor of our MoE/I-IFS to the attractor of a small ResNet and a single-head Transformer (both standard non-stochastic and no imposed contractivity). We chose the hyper-parameters so that the total number of parameters is approximately $1300$ (see supplementary Table~\ref{tab:twomoons-hparams}). Lastly, we train bigger MoE/I-IFS on larger datasets (MNIST, CelebA, CIFAR-10) to show that minimizing the collage error is a valid and promising objective function for generative modeling. We refer to the supplementary Section~\ref{app:implementation} for implementation details.
\begin{figure*}
    \centering
    \begin{tikzpicture}
        \draw[step=1.0,white,thin] (-8.5,-2.5) grid (8.5,2.7); 
        \node at (-5.5,0){\includegraphics[width=0.3\linewidth]{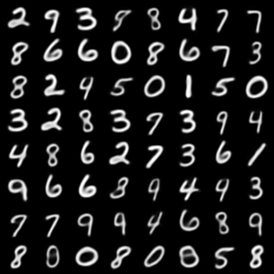}};
        \node at (0,0){\includegraphics[width=0.3\linewidth]{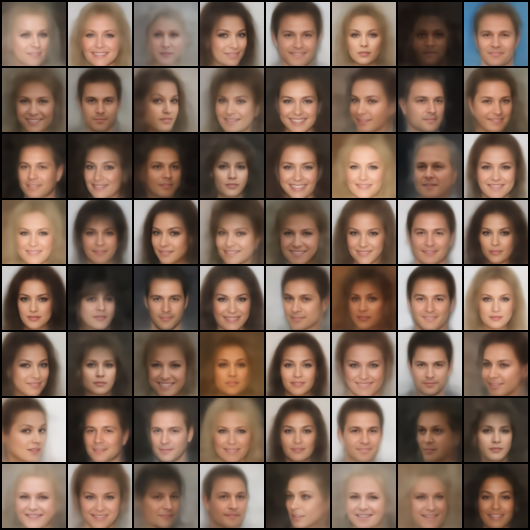}};
        \node at (5.5,0){\includegraphics[width=0.3\linewidth]{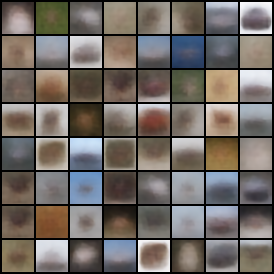}};
    \end{tikzpicture}    
    \caption{Samples from the learned MoE IFS on latent MNIST, CelebA, and CIFAR-10 representations (left to right).}
    \label{fig:latent-samples}
\end{figure*}

\subsection{Feasibility, Convergence and Error Bound}

We approximate the Wasserstein collage error using regularized OT (\texttt{geomloss}~\cite{feydy2019interpolating}) between samples of $T^\ast_\theta \mu_n$ and $\mu_n$. Figure~\ref{fig:two-moons-ifs} shows the successfully minimized collage error (left, $c_\theta=0.9$), the training set and the attractor of our trained MoE/I-IFS (center, $c_\theta=0.9$) and the generalization error and its bound for multiple contraction values $c_\theta \in \{0.3,0.5,0.7,0.9\}$. The attractor set matches the training set very well except where it is discontinuous. This is not a surprise as the functions of the IFS are continuous. Regarding the generalization bound, it is indeed tighter when $c_\theta$ is small enough. In low-dimension the statistical error is negligible, and, consistent with the theory, there is a scale factor proportional to $\nicefrac{1}{1-c_\theta}$ between the generalization error and its bound as $c_\theta$ increases. Finally, the $c_\theta$-error bound tradeoff is empirically mitigated (at least in our example). Indeed, the lower $c_\theta$ the tighter the bound, yet we observe that the generalization error increases and the bound decreases. In contrast, for larger $c_\theta$ the generalization error gets smaller while the bound worsens. In fact, the attractor is getting more constrained with lower contractivity constant (see supplementary Figure~\ref{supp-fig:two-moons-c-ifs}) but it remains close to the data distribution for moderate contractivity constants.

\subsection{Attractors}

ResNets are degenerate affine/linear P-IFS, their theoretical attractors are points/cones (when linear) and fractals (when affine). This is exactly what we get numerically in Figure~\ref{fig:two-moons-resnet-tf} (left-center). In both cases the attractors remain poorly aligned with the training set. Transformers are non-linear P-IFS, their attractors are more complex and they are in general more expressive. Hence, the attractor set matches partially the training set. Yet, we note that after 2000 epochs the objective collage error is still decreasing until 5000 epochs (supplementary Figure~\ref{supp-fig:tf_dim_10_10k_epochs}-left) and the displayed attractor (Figure~\ref{fig:two-moons-resnet-tf}-right) evolve with more training (supplementary Figure~\ref{supp-fig:tf_dim_10_10k_epochs}-right). In addition, we note that constraining the embedding dimension to the one of the dataset (\ie{} 2) prevent the transformer from learning non-linear structures, it collapses to a pair of Dirac (see supplementary Figure~\ref{supp-fig:tf_dim_2}-right). Such results are expected in theory~\cite{geshkovski2023mathematical}.

Together, the attractors of Figure~\ref{fig:two-moons-resnet-tf} must be compared to the MoE/I-IFS attractor presented in Figure~\ref{fig:two-moons-ifs}-center. To complete the comparison, we train a deterministic version of the MoE/I-IFS \ie{} a standard MoE without branch sampling (see Definition~\ref{def:moe-layer}). In this case the attractor is a Dirac at zero (even if we reactivate sampling to generate samples), see supplementary Figure~\ref{supp-fig:moe-det-attractors}.

\subsection{Larger Datasets}

Finally to test whether the collage objective can be useful to train generative models of real data, we train cascaded MoE/I-IFS on MNIST, CelebA and CIFAR-10. To this purpose, we first train an auto-encoder and then train the cascaded MoE on the latent distribution (see details in supplementary Section~\ref{app:implementation}). Finally, we can generate new samples by decoding power iterates of the IFS from any random latent initialization. The results displayed in Figure~\ref{fig:latent-samples} demonstrate that minimizing the collage objective leads to meaningful samples in higher dimension (32 to 256). Yet, the samples are worsening with the increasing complexity of the dataset (from MNIST to CIFAR-10). When trained on MNIST and Celeba, auto-encoder latent spaces distributions are known to have a manifold structure. For MNIST, the learned manifold captures variations in digit identity, stroke thickness, and slant. For CelebA, it organizes semantically meaningful factors such as pose, skin tone, and facial expression. This is less true for CIFAR-10 which is more diverse and has less systematic variability.

\section{Discussion and Conclusion}

We introduced a stochastic-IFS perspective on depth dynamics that makes the branch maps and selector kernels explicit for standard architectures, yielding existence/uniqueness of invariant measures under average contractivity and a Wasserstein generalization bound via the collage error. Empirically, the collage objective produces meaningful attractors for MoE/I-IFS and nontrivial samples in latent spaces, while highlighting the expressivity--stability tradeoff in high dimensions. More broadly, combining IFS branches with projection operators offers a way to study stability in other settings, such as DNN classifiers or AEs, where the dynamics act on constrained manifolds or feature subspaces.

Two limitations are intrinsic: (i) contractivity assumptions are strong, and deterministic networks without sampling typically collapse; (ii) the collage objective scales with OT estimation. Recent analyses of smooth attention and Lipschitz control in Transformers~\cite{castin2024smooth} suggest that data-dependent kernels can be regularized directly, and contraction arguments used in diffusion dynamics~\cite{gao2025wasserstein} point to a broader connection between generative flows and stochastic IFS. Future work should emphasize empirical extensions, including alternative discrepancies for the collage objective (sliced Wasserstein, MMD, energy distances) and training directly in data space without AE.


\bibliographystyle{icml2026/icml2026}
\bibliography{references}

\appendix

\section{Supplementary Definitions and Proofs}
\label{app:supp-def}

\begin{defn}[$c$-contraction]
    \label{def:c-contraction}
    Let $T : \mathcal X \to \mathcal X$. We say that $T$ is a $c$-contraction (or $c$-contractive) iff there exists a constant $c \in [0,1)$ such that for all $x,y \in \mathcal X$,
    \eq{\label{eq:banach-contraction}
        d(Tx,Ty) \le c d(x,y)
    }
\end{defn}

\begin{thm}[Banach Fixed Point Theorem]
    \label{thm:banach-fixed-point}
    Let $T:\Xx \to \Xx$ be $c$-contraction. Then:
    \begin{enumerate}
        \item There exists a unique fixed point $\tilde{x} \in \mathcal X$ such that
        $T \tilde{x} = \tilde{x}$.
        \item For any $x_0 \in \mathcal X$, the sequence defined by
        $x_{n+1} = T x_n$ converges to $\tilde{x}$.
        \item Moreover, for all $n \ge 0$,
        \begin{equation}
            d(x_n,\tilde{x})
            \;\le\;
            \frac{c^n}{1-c}\, d(x_1,x_0).
            \label{eq:banach-rate}
        \end{equation}
    \end{enumerate}
\end{thm}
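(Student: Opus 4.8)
The plan is to run the classical Picard iteration argument, exploiting completeness of $(\Xx,d)$. Fix an arbitrary $x_0 \in \Xx$ and set $x_{n+1} = Tx_n$. The first step is to control consecutive differences: applying the $c$-contraction property inductively gives $d(x_{n+1},x_n) \le c\, d(x_n,x_{n-1}) \le \dots \le c^n\, d(x_1,x_0)$.

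Second, I would upgrade this to a Cauchy estimate. For $m > n$, the triangle inequality together with the previous geometric bound yields $d(x_m,x_n) \le \sum_{k=n}^{m-1} d(x_{k+1},x_k) \le \left(\sum_{k=n}^{m-1} c^k\right) d(x_1,x_0) \le \frac{c^n}{1-c}\, d(x_1,x_0)$, where summing the geometric series uses $c \in [0,1)$. Since $c^n \to 0$, the right-hand side vanishes as $n \to \infty$, so $(x_n)_n$ is Cauchy; completeness of $\Xx$ then produces a limit $\tilde{x}$, which proves the convergence claim in item~2.

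Third, I would verify that $\tilde{x}$ is a fixed point. Any $c$-contraction is Lipschitz and hence continuous, so passing to the limit in $x_{n+1} = Tx_n$ gives $\tilde{x} = \lim_n x_{n+1} = \lim_n Tx_n = T\tilde{x}$. Uniqueness follows directly from contractivity: if $Ty = y$ as well, then $d(\tilde{x},y) = d(T\tilde{x},Ty) \le c\, d(\tilde{x},y)$, so $(1-c)\,d(\tilde{x},y) \le 0$, forcing $d(\tilde{x},y) = 0$. This settles item~1. Finally, the rate estimate in item~3 is obtained by letting $m \to \infty$ in the Cauchy estimate above: by continuity of $d(x_n,\cdot)$, one has $d(x_n,\tilde{x}) = \lim_{m\to\infty} d(x_n,x_m) \le \frac{c^n}{1-c}\, d(x_1,x_0)$.

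The argument is entirely routine; the only genuine ingredient is completeness of $\Xx$, which converts the Cauchy sequence into an actual limit. The main point requiring care is the geometric-series bookkeeping in the Cauchy estimate, namely obtaining the \emph{uniform-in-$m$} bound $c^n/(1-c)$ rather than a bound that degrades as $m$ grows. It is precisely this uniformity that simultaneously closes the Cauchy argument and delivers the quantitative convergence rate of item~3.
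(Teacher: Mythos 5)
Your proposal is correct and follows essentially the same route as the paper's proof: Picard iteration, the geometric bound on consecutive increments, the uniform Cauchy estimate $c^n/(1-c)\,d(x_1,x_0)$, completeness to extract the limit, continuity of $T$ for the fixed-point identity, contractivity for uniqueness, and $m\to\infty$ for the rate. No gaps.
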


\begin{proof}[Proof of Banach Fixed Point Theorem~\ref{thm:banach-fixed-point}]
Fix $x_0 \in \mathcal X$ and define the Picard iterates
$x_{n+1}=Tx_n$ for $n\ge 0$. Using \eqref{eq:banach-contraction},
\[
    d(x_{n+1},x_n)
    = d(Tx_n,Tx_{n-1})
    \le c\, d(x_n,x_{n-1}).
\]
By induction,
\[
    d(x_{n+1},x_n)
    \le c^n\, d(x_1,x_0), \qquad \forall n \ge 0.
\]
For $m>n$, the triangle inequality yields
\eqal{
    d(x_m,x_n)
    \le \sum_{k=n}^{m-1} d(x_{k+1},x_k)
    &\le \sum_{k=n}^{\infty} c^k\, d(x_1,x_0) \\
    & = \frac{c^n}{1-c}\, d(x_1,x_0).
}
Hence $(x_n)_{n\ge 0}$ is a Cauchy sequence. Since $(\mathcal X,d)$ is
complete, there exists $\tilde{x} \in \mathcal X$ such that $x_n \to \tilde{x}$.

We now show that $\tilde{x}$ is a fixed point. By continuity of $T$
(which follows from \eqref{eq:banach-contraction}),
\[
    T\tilde{x} = T\bigl(\lim_{n\to\infty} x_n\bigr)
    = \lim_{n\to\infty} T x_n
    = \lim_{n\to\infty} x_{n+1}
    = \tilde{x}.
\]
Thus $\tilde{x}$ is a fixed point.

To prove uniqueness, suppose $y^\ast$ is another fixed point,
$Ty^\ast = y^\ast$. Then
\[
    d(\tilde{x},y^\ast)
    = d(T\tilde{x},Ty^\ast)
    \le c\, d(\tilde{x},y^\ast).
\]
Since $c<1$, this implies $(1-c)\,d(\tilde{x},y^\ast)\le 0$, hence
$d(\tilde{x},y^\ast)=0$ and $\tilde{x}=y^\ast$.

Finally, the estimate \eqref{eq:banach-rate} follows by letting $m\to\infty$
in the previous bound:
\[
    d(x_n,\tilde{x})
    \le \frac{c^n}{1-c}\, d(x_1,x_0).
\]
This completes the proof.
\end{proof}

\begin{proof}[Proof of Banach Fixed Point Corollary~\ref{cor:metric-collage}]
Using $T\tilde{x} = \tilde{x}$ and the triangle inequality,
\[
    d(\tilde{x},y)
    = d(T\tilde{x},y)
    \le d(T\tilde{x},Ty) + d(Ty,y).
\]
By the contraction property \eqref{eq:banach-contraction},
\[
    d(T\tilde{x},Ty) \;\le\; c\, d(\tilde{x},y),
\]
and by definition of $\varepsilon$ we have $d(Ty,y) = \varepsilon$.
Thus
\[
    d(\tilde{x},y)
    \;\le\;
    c\, d(\tilde{x},y) + \varepsilon.
\]
Rearranging gives
\[
    (1-c)\, d(\tilde{x},y) \;\le\; \varepsilon,
\]
which is equivalent to \eqref{eq:collage-ineq}.
\end{proof}

\begin{proof}[Proof of Proposition~\ref{prop:support-invariant-measure-hutchinson}]
    Let $K_{\tilde{\mu}} \eqdef \operatorname{supp}(\tilde{\mu})$. We have
    \begin{align*}
       K_{\tilde{\mu}}
         &= \operatorname{supp}(T^\ast \tilde{\mu}) \\
         &= \operatorname{supp}\left(\sum_{\xi\in\Ii} (w_\xi)\#(q_\xi\cdot\tilde{\mu})\right) \\
         &= \overline{\bigcup_{\xi\in\Ii} w_\xi(\operatorname{supp}(q_\xi\cdot\tilde{\mu}))} \\
         &= \overline{\bigcup_{\xi\in\Ii} w_\xi(\operatorname{supp}(\tilde{\mu}))}.
    \end{align*}
    The equality from the second to the third line holds because the support of
    the sum is the closure of the union of the individual supports. The last
    equality holds because for all $\xi \in \Ii$, $q_\xi>0$. Closed-ness of
    support and continuity for the functions gives
    \[
        K_{\tilde{\mu}} = \Hh(K_{\tilde{\mu}}),
    \]
    \ie{} $K_{\tilde{\mu}}$ is a fixed point of the Hutchinson operator on
    compacts. Contractivity of the $w_\xi$ makes that operator a contraction on
    $(\mathcal{K}(\Xx),d_H)$ with unique fixed point $\tilde{K}$. Hence
    $K_{\tilde{\mu}} = \tilde{K}$.
\end{proof}

\begin{proof}[Proof of Proposition~\ref{prop:residual-ifs}]
    We verify easily that the Markov recursion is conform to Definition~\ref{def:place-ifs}.
\end{proof}

\begin{proof}[Proof of Proposition~\ref{prop:transformerIFS}]
    Conditioned on $\bx$, the indices $(\Xi_i)$ are independent with marginals
    $(\alpha_{i,\cdot}(\bx))$. Sampling
    \eq{
        \Xi \mid \bX=\bx \sim \Cc(p_{\mathrm{TF}}(\bx))
    }
    and applying $w^\theta_\Xi$ yields a well-defined Markov recursion and
    therefore a place-dependent IFS on $\Xx^n$. Taking conditional expectation
    over $\Xi$ produces $G_\theta$ as claimed.
\end{proof}

\begin{proof}[Proof of Proposition~\ref{prop:moe-pifs}]
    We verify easily that the Markov recursion is conform to Definition~\ref{def:place-ifs}.
\end{proof}

\begin{proof}[Proof of Theorem~\ref{thm:invariant-measure-pifs}]
    See~\citet{stenflo2002uniqueness} Section 4 and references therein.
\end{proof}

\begin{proof}[Proof of Theorem~\ref{thm:stochastic-collage}]
Let $(\Ww,q)$ be a $c$-contractive I-IFS, so each $w_\xi$ is $c$-Lipschitz.
For any $\mu,\nu\in\mathcal{P}(\Xx)$,
\eqal{
    \mathrm{W}_2^2(T^\ast\mu,T^\ast\nu)
    &= \mathrm{W}_2^2\Big(\sum_{\xi\in\Ii} q_\xi (w_\xi)_\#\mu,\ 
        \sum_{\xi\in\Ii} q_\xi (w_\xi)_\#\nu\Big) \\
    &\le \sum_{\xi\in\Ii} q_\xi\,
        \mathrm{W}_2^2\big((w_\xi)_\#\mu,(w_\xi)_\#\nu\big) \\
    &\le \sum_{\xi\in\Ii} q_\xi\, c^2\,\mathrm{W}_2^2(\mu,\nu)
      = c^2\,\mathrm{W}_2^2(\mu,\nu).
}
The first inequality uses convexity of $\mathrm{W}_2^2$ and the second uses the
pushforward stability $\mathrm{W}_2((w_\xi)_\#\mu,(w_\xi)_\#\nu)\le
\mathrm{Lip}(w_\xi)\,\mathrm{W}_2(\mu,\nu)$; see \citet{villani2008optimal}.
Thus $T^\ast$ is $c$-contractive in $\mathrm{W}_2$, and Corollary~\ref{cor:metric-collage}
applies.
\end{proof}

\begin{proof}[Proof of Theorem~\ref{thm:generalization-bound}]
    Let $\mu_\theta$ be the fixed point of $T^\ast_\theta$. By the triangle
    inequality,
    \eq{
        \mathrm{W}_2(\mu_\theta,\mu)
          \le \mathrm{W}_2(\mu_\theta,\mu_n) + \mathrm{W}_2(\mu_n,\mu).
    }
    Since $T^\ast_\theta$ is $c_\theta$-contractive and
    $\varepsilon(n,\theta)=\mathrm{W}_2(T^\ast_\theta\mu_n,\mu_n)$, we have
    \eqal{
        \mathrm{W}_2(\mu_\theta,\mu_n)
        &\le \mathrm{W}_2(\mu_\theta,T^\ast_\theta\mu_n)
          + \mathrm{W}_2(T^\ast_\theta\mu_n,\mu_n) \\
        &\le c_\theta\,\mathrm{W}_2(\mu_\theta,\mu_n) + \varepsilon(n,\theta).
    }
    Rearranging yields
    \[
        \mathrm{W}_2(\mu_\theta,\mu_n)
          \le \frac{\varepsilon(n,\theta)}{1-c_\theta},
    \]
    which gives the claimed bound.
\end{proof}

\begin{table}[H]
  \centering
  \small
  \setlength{\tabcolsep}{2pt}  
  \renewcommand{\arraystretch}{1.2}
  \begin{tabular}{lcccc}
  \toprule
  \textbf{Model} & \textbf{Maps $w_i$} & \textbf{Kernel $p_\xi(x)$} &
  \textbf{IFS type} \\
  \midrule
  \makecell[l]{ReLU residual \\ block  (sampled)} &
    \makecell[l]{Definition \ref{def:residual-branch}} &
    $\delta_{\xi(x)}$ &
    \makecell[c]{P-IFS \\ (degenerate)} \\
  \makecell[l]{Transformer \\ block (sampled)} &
    \makecell[l]{Definition \ref{def:transform-branch}} &
    $\prod_i \alpha_{i,\xi_i}(\bx)$ &
    P-IFS \\
  MoE (sampled) &
    Generic &
    Generic &
    P-IFS \\
  \bottomrule
\end{tabular}
\caption{Summary of IFS interpretations across architectures.}
\label{tab:ifs-architectures}
\end{table}

\section{Softplus ResNet as an I-IFS}
\label{app:softplus-ifs}

Construction of IFSs associated to neural networks is not only feasible for ReLU activations. It is in fact always feasible. Below we make a construction for a ResNet with Softplus activation.

\begin{defn}[Softplus residual block]
Let $\phi(z)=\log(1+e^z)$ be the softplus activation applied coordinatewise. Given $A\in\RR^{m\times d}$, $b\in\RR^m$, $B\in\RR^{d\times m}$, $c\in\RR^d$, define the residual block $F_\theta:\Xx\to\Xx$ by
\[
    F_\theta(x)=x + B\,\phi(Ax+b)+c,
    \qquad \theta=(A,B,b,c).
\]
\end{defn}

\begin{defn}[Softplus branch family and kernel]
Let $\rho$ be the logistic density. Let $\Ii=\RR^m$ and for $\tau\in\Ii$ define the branch map
\[
    w_\tau(x)=x + B\,\mathrm{ReLU}\bigl(Ax+b-\tau\bigr) + c,
\]
where $\mathrm{ReLU}$ is applied coordinatewise. Let $p$ be the product measure with density $\prod_{j=1}^m \rho(\tau_j)$ on $\Ii$.
\end{defn}

\begin{prop}[Softplus residual block as I-IFS]
The pair $(\Ww_{\mathrm{sp}},p)$ with $\Ww_{\mathrm{sp}}=\{w_\tau\}_{\tau\in\Ii}$ defines a place-independent IFS on $\Xx$, and the softplus residual block satisfies
\[
    F_\theta(x)=\EE_{\tau\sim p}[w_\tau(x)].
\]
\end{prop}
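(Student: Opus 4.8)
The plan is to reduce the statement to a single scalar integral identity and then lift it coordinatewise. Before doing so, I would note that the index set here is the continuum $\Ii=\RR^m$ equipped with the probability measure $p(d\tau)=\prod_{j=1}^m\rho(\tau_j)\,d\tau$, so that the I-IFS formalism of Definition~\ref{def:place-ifs} and the operators of Definition~\ref{def:markov-op} must be read with the finite sum $\sum_{\xi} q_\xi(\cdot)$ replaced by the integral $\int_{\Ii}(\cdot)\,p(d\tau)$. This replacement is legitimate once I check that $\tau\mapsto w_\tau(x)$ is measurable (clear, since $\mathrm{ReLU}$ and the affine maps are continuous) and that the expectation is finite; the latter holds because the logistic law has a finite first moment, so $\EE_{\tau\sim\rho}\lvert z-\tau\rvert\le \lvert z\rvert+\EE\lvert\tau\rvert<\infty$ and $\mathrm{ReLU}(z-\tau)\le\lvert z-\tau\rvert$.

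The key lemma is the scalar identity
\[
    \EE_{\tau\sim\rho}\bigl[\mathrm{ReLU}(z-\tau)\bigr]=\log(1+e^{z})=\phi(z),
    \qquad z\in\RR ,
\]
where $\rho$ is the logistic density. I would prove it by writing $I(z)=\int_{-\infty}^{z}(z-\tau)\rho(\tau)\,d\tau$ and differentiating under the integral sign. The boundary contribution vanishes because the integrand $(z-\tau)\rho(\tau)$ is zero at $\tau=z$, leaving $I'(z)=\int_{-\infty}^{z}\rho(\tau)\,d\tau=\Sigma(z)$, the logistic CDF $\Sigma(z)=1/(1+e^{-z})$. Since $\tfrac{d}{dz}\log(1+e^{z})=\Sigma(z)$ and $I(z)\to 0$ as $z\to-\infty$ (by dominated convergence, dominating $\mathrm{ReLU}(z-\tau)$ by the integrable $\mathrm{ReLU}(-\tau)$ for $z\le 0$), integrating back gives $I(z)=\log(1+e^{z})=\phi(z)$, as claimed.

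To assemble the result, I would use that the components $\tau_1,\dots,\tau_m$ are independent under the product measure $p$ and that $B$ is linear. The $j$-th coordinate of $\mathrm{ReLU}(Ax+b-\tau)$ depends only on $\tau_j$, so applying the lemma with $z=(Ax+b)_j$ yields $\EE_{\tau\sim p}\bigl[\mathrm{ReLU}(Ax+b-\tau)\bigr]=\phi(Ax+b)$ componentwise. Linearity of $B$ then lets me pull the expectation through, giving
\[
    \EE_{\tau\sim p}[w_\tau(x)]
      = x + B\,\EE_{\tau\sim p}\bigl[\mathrm{ReLU}(Ax+b-\tau)\bigr] + c
      = x + B\,\phi(Ax+b) + c
      = F_\theta(x),
\]
which is exactly the conditional-expectation identity of the proposition; well-definedness of the Markov recursion $X_{t+1}=w_{\tau_t}(X_t)$ with $\tau_t\sim p$ then follows as in Definition~\ref{def:place-ifs}.

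The main obstacle is the scalar identity together with the (mild) extension of the I-IFS definitions to a continuous index set: the computation is routine once one recognizes softplus as the antiderivative of the logistic sigmoid, i.e.\ as the expected hinge under logistic noise, but one must justify differentiation under the integral sign (dominated convergence, using the finite first moment) and check that the integral form of the transfer operator inherits the structure stated for the finite-index case. No contractivity is asserted here, so only this algebraic/measure-theoretic identity, and not any fixed-point argument, is required.
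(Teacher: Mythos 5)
Your proof is correct and follows essentially the same route as the paper: reduce to the scalar identity $\phi(z)=\EE_{\tau\sim\rho}[\mathrm{ReLU}(z-\tau)]$ and apply it coordinatewise using independence of the $\tau_j$ and linearity of $B$. The only difference is that the paper merely asserts the scalar identity, whereas you verify it (correctly) by differentiating under the integral sign and matching boundary behaviour, and you also spell out the measurability/integrability needed to extend the finite-index IFS definitions to the continuous index set $\Ii=\RR^m$ — both welcome additions rather than deviations.
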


\begin{proof}
For a scalar $z$, $\phi(z)=\EE_{T\sim\rho}\big[\mathrm{ReLU}(z-T)\big]$ where $\rho$ is the logistic density. Applying this coordinatewise and using independence of the components of $\tau$ yields the stated expectation identity.
\end{proof}

\section{Stability of IFS \vs{} Expected IFS ?}
\label{app:avg-vs-expectation}

We distinguish two notions of average contractivity for a place-dependent P-IFS $(\{w_\xi\}_{\xi\in\Ii},p)$ on $(\Xx,d)$ with selector kernel $(p_\xi(x))_{\xi\in\Ii}$ and Lipschitz constants $c_\xi$ for each $w_\xi$.

\begin{defn}[Strong average Lipschitz contractivity]
We say that the P-IFS is strongly average-contractive if
\begin{equation}
  \label{eq:strong-average-Lip}
  \sup_{x\in\Xx}
  \sum_{\xi\in\Ii} p_\xi(x)\,c_\xi
  \;\le\; c < 1,
\end{equation}
and the selector kernel $p: \Xx \to \Delta^{|\Ii|-1}$ depends on $x$ in a Lipschitz way (\eg{}\ in total variation or Wasserstein distance), as in Theorem~\ref{thm:invariant-measure-pifs}.
\end{defn}

\begin{defn}[Lyapunov (log-average) contractivity]
We say that the P-IFS is Lyapunov-average-contractive if, for the random Lipschitz constants $c_{\Xi_t}$ along a trajectory, one has
\begin{equation}
  \label{eq:lyap-average}
  \limsup_{t\to\infty}\frac{1}{t}
  \sum_{k=0}^{t-1}\log c_{\Xi_k} \;<\; 0,
  \qquad \text{a.s.}
\end{equation}
In the place-independent case, this reduces to $\EE[\log c_{\Xi}]<0$. This is the condition classically used in the theory of products of random matrices.
\end{defn}

In our framework the deterministic map associated with the P-IFS is the barycentric map
\begin{equation}
  F(x) \;=\; \EE[w_\Xi(x)\mid X=x]
          \;=\; \sum_{\xi\in\Ii} p_\xi(x)\,w_\xi(x).
\end{equation}
The following proposition clarifies when $F$ inherits contractivity and when it does not.

\begin{prop}[Inheritance under strong average Lipschitz]
  \label{prop:inherit-strong-average}
  Assume the strong average Lipschitz condition \eqref{eq:strong-average-Lip} and that each $w_\xi$ is $c_\xi$-Lipschitz on $(\Xx,d)$. Then the barycentric map $F$ is a contraction with Lipschitz constant at most $c<1$, \ie{}
  \[
    d\big(F(x),F(y)\big)
    \;\le\;
    c\,d(x,y)
    \qquad\forall x,y\in\Xx.
  \]
  In particular, $F$ has a unique fixed point and the deterministic recursion $x_{t+1}=F(x_t)$ converges to it for any initial condition.
\end{prop}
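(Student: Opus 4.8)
The plan is to bound $d\bigl(F(x),F(y)\bigr)$ directly. Since $F$ is the $p$-weighted barycenter $F(x)=\sum_{\xi} p_\xi(x)\,w_\xi(x)$, the statement implicitly places us in (a convex subset of) a normed space, so I write $d(u,v)=\|u-v\|$. The key idea is to separate the two sources of variation — the motion of the branch outputs and the motion of the selector weights — via the exact splitting
\[
    F(x)-F(y)
      = \sum_{\xi\in\Ii} p_\xi(x)\bigl(w_\xi(x)-w_\xi(y)\bigr)
      + \sum_{\xi\in\Ii}\bigl(p_\xi(x)-p_\xi(y)\bigr)\,w_\xi(y),
\]
and then estimate the two sums separately before recombining by the triangle inequality.

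The first, branch-variation, sum is the routine half. Applying the triangle inequality, then the $c_\xi$-Lipschitz bound on each $w_\xi$, then the fact that $(p_\xi(x))_\xi\in\Delta^{|\Ii|-1}$ together with the strong average Lipschitz hypothesis~\eqref{eq:strong-average-Lip}, I obtain
\[
    \Bigl\|\sum_{\xi} p_\xi(x)\bigl(w_\xi(x)-w_\xi(y)\bigr)\Bigr\|
      \le \sum_{\xi} p_\xi(x)\,c_\xi\,d(x,y)
      \le \Bigl(\sup_{z}\sum_{\xi} p_\xi(z)\,c_\xi\Bigr)\,d(x,y)
      \le c\,d(x,y).
\]
This already settles the whole claim whenever the second sum vanishes, in particular for an I-IFS, where $p$ is constant so $p_\xi(x)=p_\xi(y)$: then $F$ is exactly $c$-Lipschitz and the Banach fixed point theorem (Theorem~\ref{thm:banach-fixed-point}) supplies the unique globally attracting fixed point.

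The main obstacle is the second, weight-variation, sum $\sum_{\xi}(p_\xi(x)-p_\xi(y))\,w_\xi(y)$, which is where genuine place-dependence bites. The natural handle is that $\sum_\xi(p_\xi(x)-p_\xi(y))=0$, since both are probability vectors; this lets me recenter by an arbitrary fixed $a$ and estimate
\[
    \Bigl\|\sum_{\xi}\bigl(p_\xi(x)-p_\xi(y)\bigr)\,w_\xi(y)\Bigr\|
      = \Bigl\|\sum_{\xi}\bigl(p_\xi(x)-p_\xi(y)\bigr)\bigl(w_\xi(y)-a\bigr)\Bigr\|
      \le \|p(x)-p(y)\|_1\,\sup_\xi\|w_\xi(y)-a\|.
\]
Invoking the Lipschitz regularity of the selector kernel $p$ built into the strong average Lipschitz hypothesis bounds $\|p(x)-p(y)\|_1\le \mathrm{Lip}(p)\,d(x,y)$, so this contribution is at most $\mathrm{Lip}(p)\,R\,d(x,y)$, with $R=\sup_{y,\xi}\|w_\xi(y)-a\|$ controlling the (uniform) spread of branch outputs.

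The delicate point I anticipate is that this weight-variation term does not vanish and is not controlled by the average contractivity alone. Consequently the honest estimate the splitting yields is $d\bigl(F(x),F(y)\bigr)\le\bigl(c+\mathrm{Lip}(p)\,R\bigr)d(x,y)$: the clean constant $c$ is really the I-IFS content (constant weights, $R$ irrelevant), whereas in the genuinely place-dependent case one needs the branch outputs to be uniformly bounded on an invariant region and the gating to vary slowly enough that $\mathrm{Lip}(p)\,R$ is negligible — \ie{} the reinforced smallness $c+\mathrm{Lip}(p)\,R<1$ — in order to conclude that $F$ is a contraction and hence admits a unique attracting fixed point.
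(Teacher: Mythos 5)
Your proof follows essentially the same route as the paper's: the identical exact splitting of $F(x)-F(y)$ into a branch-variation term (bounded by the average Lipschitz sum, hence by $c$) and a weight-variation term (bounded via the Lipschitz regularity of the selector kernel). Your explicit estimate $\mathrm{Lip}(p)\,R\,d(x,y)$ on the second term, and your observation that the clean constant $c$ is therefore only exact for an I-IFS while the genuinely place-dependent case needs the reinforced smallness $c+\mathrm{Lip}(p)\,R<1$, makes precise what the paper only gestures at with ``up to adjusting constants to absorb the kernel regularity term''; you have, if anything, been more careful than the published argument.
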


\begin{proof}
For any $x,y\in\Xx$,
\begin{align*}
  d\big(F(x),F(y)\big)
    &= d\Big(\sum_{\xi} p_\xi(x)\,w_\xi(x),
             \sum_{\xi} p_\xi(y)\,w_\xi(y)\Big) \\
    &\le
      d\Big(\sum_{\xi} p_\xi(x)\,w_\xi(x),
             \sum_{\xi} p_\xi(x)\,w_\xi(y)\Big)\\
    & \quad + d\Big(\sum_{\xi} p_\xi(x)\,w_\xi(y),
              \sum_{\xi} p_\xi(y)\,w_\xi(y)\Big).
\end{align*}
Under the regularity assumptions on $p(\cdot\mid x)$, the second term can be bounded by a multiple of $d(x,y)$ (as in Theorem~\ref{thm:invariant-measure-pifs}), while the first term satisfies
\eqal{
  d\Big(&\sum_{\xi} p_\xi(x)\,w_\xi(x),
        \sum_{\xi} p_\xi(x)\,w_\xi(y)\Big) \\
  &\le
  \sum_{\xi} p_\xi(x)\,d\big(w_\xi(x),w_\xi(y)\big) \\
  &\le 
  \Big(\sum_{\xi} p_\xi(x)c_\xi\Big)\,d(x,y).
}
Taking the supremum over $x$ and using \eqref{eq:strong-average-Lip} yields the claimed contraction bound for $F$ (up to adjusting constants to absorb the kernel regularity term).
\end{proof}

\begin{exmp}[Failure under Lyapunov-average contractivity]
\label{ex:avg-vs-expectation}
Consider $\Xx=\RR$ with two maps
\[
  w_1(x) = \tfrac12 x,
  \qquad
  w_2(x) = 2x,
\]
and a place-independent selector kernel $\PP(\Xi=1)=p$, $\PP(\Xi=2)=1-p$. Each $w_i$ is linear with Lipschitz constants $c_1=\tfrac12$, $c_2=2$, so
\[
  \EE[\log c_{\Xi}]
    = p\log\tfrac12 + (1-p)\log 2
    = (1-2p)\log 2.
\]
For $p\in(\tfrac12,1)$ this Lyapunov exponent is negative and the random recursion $X_{t+1}=w_{\Xi_t}(X_t)$ is contracting on average in the sense of
\eqref{eq:lyap-average}.  However the barycentric map is
\[
  F(x)
    = \EE[w_\Xi(x)]
    = p\,\tfrac12 x + (1-p)\,2x
    = (2-\tfrac{3p}{2})x,
\]
with Lipschitz constant $|2-\tfrac{3p}{2}|>1$ whenever $p\in(\tfrac12,\tfrac23)$. In this regime the random IFS is Lyapunov-average-contractive, but the deterministic map $F$ is expanding and its iterates diverge for all $x\neq 0$.  Thus Lyapunov-average contractivity of the random P-IFS does not imply contractivity or stability of its deterministic barycentric map.
\end{exmp}

This example shows that our strong average Lipschitz condition \eqref{eq:strong-average-Lip} is precisely the regime in which stability and convergence are inherited by the deterministic update, whereas weaker Lyapunov-type conditions only control the stochastic recursion and have no direct implication for $F$.

\section{Deep MoE}
\label{app:deep-moe}

\begin{defn}[Deep Mixture-of-Experts as a P-IFS]
    \label{def:deep-moe}
    Let each layer $d\in\{1,\dots,D\}$ consist of a selector kernel
    $p_{\mathrm{MoE}}^{(d)}= p^{(d)}: \Xx \to \Delta^{K-1}$ on a finite index set $\Ii^{(d)}$ and branch maps
    $\{ w^{(d)}_\xi \}_{\xi\in\Ii^{(d)}}$.  
    A depth-$D$ MoE defines a place-dependent IFS on $\Xx$ through the random
    cascade
    \eqal{
        x_{t+1}
          = w^{(D)}_{\Xi^{(D)}_t} & \circ \cdots \circ w^{(1)}_{\Xi^{(1)}_t}(x_t)\\
                                  & \Xi^{(d)}_t | X=x_t \sim \Cc(p^{(d)}(x_t)).
    }
    The deterministic update is the conditional expectation
    \eq{
        H(x)
          = \EE\!\left[
                w^{(D)}_{\Xi^{(D)}} \circ \cdots \circ
                w^{(1)}_{\Xi^{(1)}}(x)
              \right],
    }
    and the associated transfer operator is the composition
    \eq{
        T^\ast
          = T_D^\ast \circ \cdots \circ T_1^\ast,
        \qquad
        T_d^\ast \mu
          = \sum_{\xi\in\Ii^{(d)}}
                (w^{(d)}_\xi) \# \bigl( p^{(d)}_\xi \cdot \mu \bigr).
    }
\end{defn}

This amounts to consider depth-dependent recursion and it increases the number of possible maps. Overall, it can be seen as a one-layer IFS with a multi-index set to account for all possible branches.

\section{Implementation Details}
\label{app:implementation}

\paragraph{Two‑moons.} We generate 2D point clouds with 2,048 training points and a 50k point reference set ($\mathrm{noise}=0.1$, $\mathrm{radius}=2.0$). Models follow Table~\ref{tab:twomoons-hparams}(MoE/ResNet/Transformer) and are trained with the stochastic collage objective using Sinkhorn loss ($\mathrm{blur}=0.05$), batch size 256, 2k epochs, Adam ($\mathrm{lr}=1e‑3$). For MoE runs we use contraction ($c<1$) and ($\sigma=0.04$); ResNet/Transformer are non‑contractive unless spectral normalization is enabled. Dynamics/attractor plots use 50 Markov steps; when computing the bound we use $\mathrm{burn‑in}=100$ and Monte‑Carlo  estimates with 8 batches of size 512.

\begin{table}[H]
\centering
\small
\setlength{\tabcolsep}{2pt}  
\begin{tabular}{lccc}
    \toprule
    & MoE IFS & ResNet & Transformer \\
    \midrule
    Base map &
    \makecell[l]{Nonlinear \\ MLP branches} &
    \makecell[l]{Repeated \\ residual block} &
    \makecell[l]{Self-attention \\ on contexts} \\
    Width &
    32 &
    32 &
    10 \\
    Depth &
    1 &
    8 &
    1 \\
    Routing &
    $K = 8$ & --- & $1$ head \\
    Contraction $c$ &
    $c < 1$ &
    None & None \\
    Noise $\sigma$ &
    $0.04$ & $0$ & $0$ \\
    Context size &
    --- & --- & $n = 2$ \\
    \# params &
    1304 & 1296 & 1382 \\
    \bottomrule
\end{tabular}
\caption{Key hyperparameters for the two-moons experiments. The width of the transformers is the dimension of the input embedding.}
\label{tab:twomoons-hparams}
\end{table}

\paragraph{MNIST/CIFAR‑10/CelebA.} We train in latent space using a ConvResNetAE (MSE reconstruction) and then fit a latent IFS on encoded samples. Datasets are resized to $32\times 32$ for MNIST/ CIFAR‑10 and center‑cropped ($148$) then resized to $64\times 64$ for CelebA; we use latent dims/base channels of 32/64 (MNIST) and 256/128 (CIFAR‑10, CelebA). The latent IFS is a stacked NonlinearContractiveIFS (see Definition~\ref{def:deep-moe}) with $D=6$ stages, $K=128$ maps, branch depth 16 (depth of individual maps $w_i$), hidden dim 256, contraction ($c=0.99$) (warm‑up from ($c_{\text{init}}=10$) over 12 epochs), and ($\sigma=0.01$); routing uses constant‑IFS with learned logits ($\mathrm{temperature}=1.0$). We fit on 40k latents with Sinkhorn loss in latent space using Adam ($\mathrm{lr}=1e‑5$), batch size 512, 30 epochs with ReduceLROnPlateau, and generate samples by iterating $50$ IFS steps and decoding $64$ latents.

\section{Supplementary Figures}

\begin{figure*}
    \centering
    \begin{tikzpicture}
        \draw[step=1.0,white,thin] (-8.5,-2.5) grid (8.5,2.7); 
        \node at (-5.5,0){\includegraphics[width=0.3\linewidth]{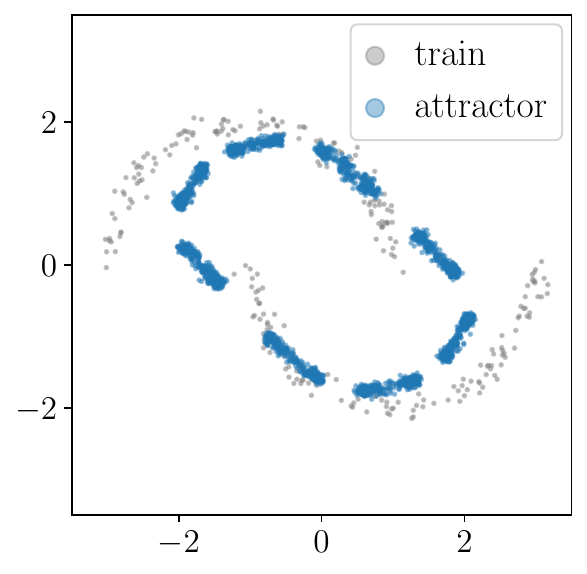}};
        \node at (0,0.0){\includegraphics[width=0.3\linewidth]{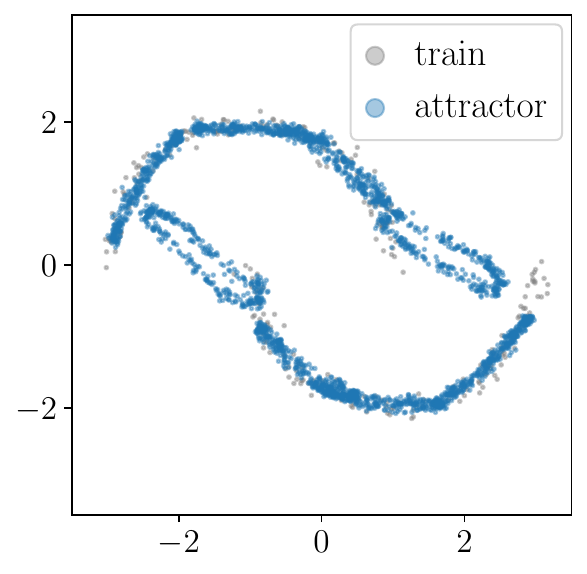}};
        \node at (5.5,0){\includegraphics[width=0.3\linewidth]{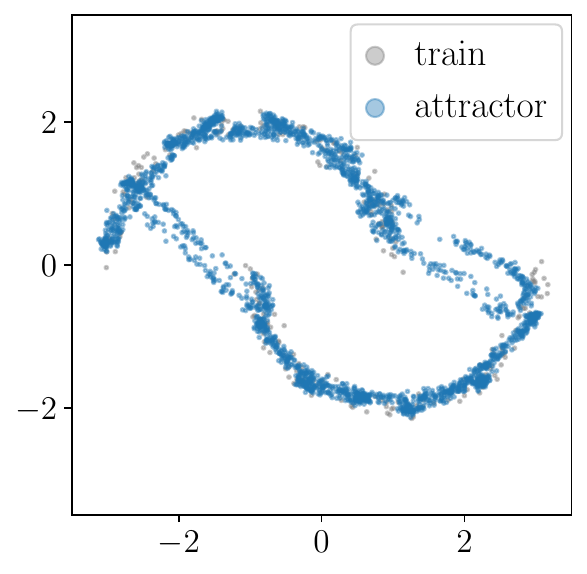}};
    \end{tikzpicture}    
    \vspace{-5mm}
    \caption{Training data and samples of the attractor measure of the trained IFS for contraction constants $c \in \{0.3,0.5,0.7\}$ (left to right).}
    \label{supp-fig:two-moons-c-ifs}
\end{figure*}

\begin{figure*}
    \centering
    \begin{tikzpicture}
        \draw[step=1.0,white,thin] (-8.5,-2.5) grid (8.5,2.7); 
        \node at (-3.5,0){\includegraphics[width=0.3\linewidth]{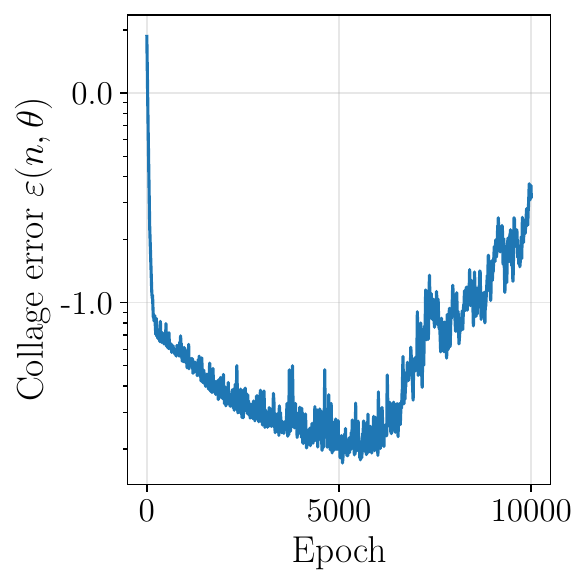}};
        \node at (3.5,0.15){\includegraphics[width=0.282\linewidth]{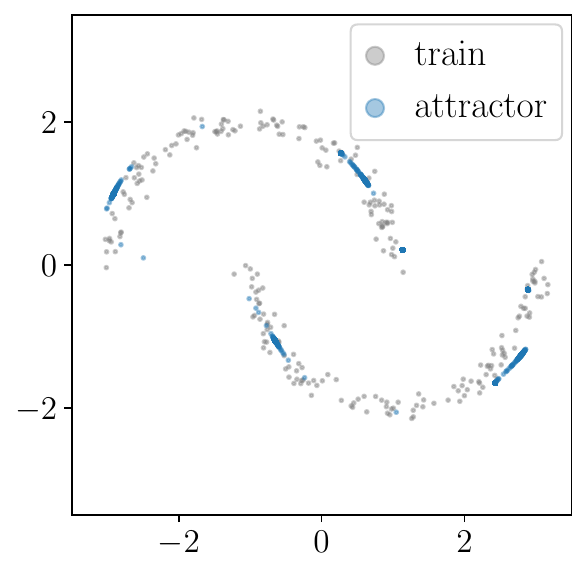}};
    \end{tikzpicture}    
    \vspace{-5mm}
    \caption{Empirical collage error $\varepsilon(n,\theta)$ over 10k epoch for transformer with embedding dimension $10$ (left) and its attractor (right).}
    \label{supp-fig:tf_dim_10_10k_epochs}
\end{figure*}

\begin{figure*}
    \centering
    \begin{tikzpicture}
        \draw[step=1.0,white,thin] (-8.5,-2.5) grid (8.5,2.7); 
        \node at (-3.5,0.0){\includegraphics[width=0.3\linewidth]{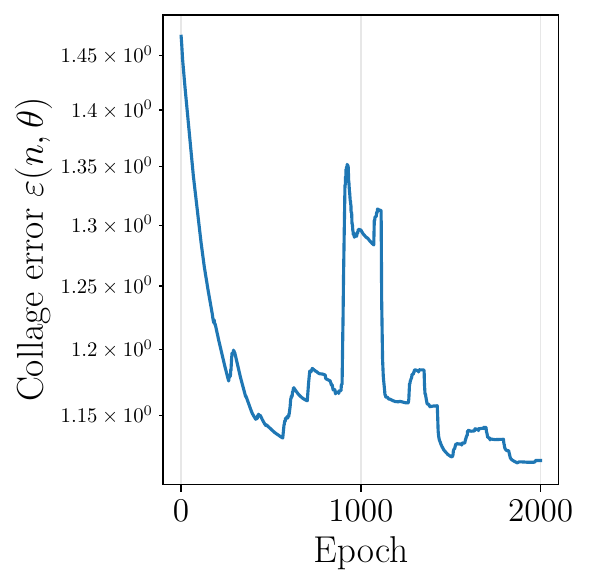}};
        \node at (3.5,0.15){\includegraphics[width=0.282\linewidth]{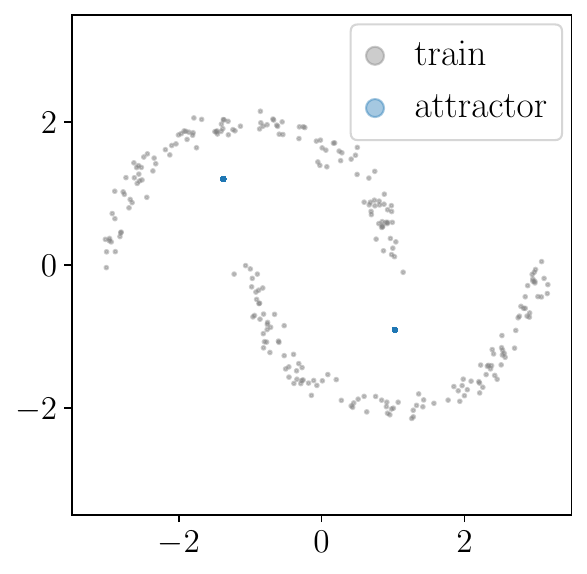}};
    \end{tikzpicture}    
    \vspace{-5mm}
    \caption{Empirical collage error $\varepsilon(n,\theta)$ for transformer with embedding dimension $2$ (left) and its attractor (right).}
    \label{supp-fig:tf_dim_2}
\end{figure*}

\begin{figure*}
    \centering
    \begin{tikzpicture}
        \draw[step=1.0,white,thin] (-8.5,-2.5) grid (8.5,2.7); 
        \node at (-3.5,0.0){\includegraphics[width=0.3\linewidth]{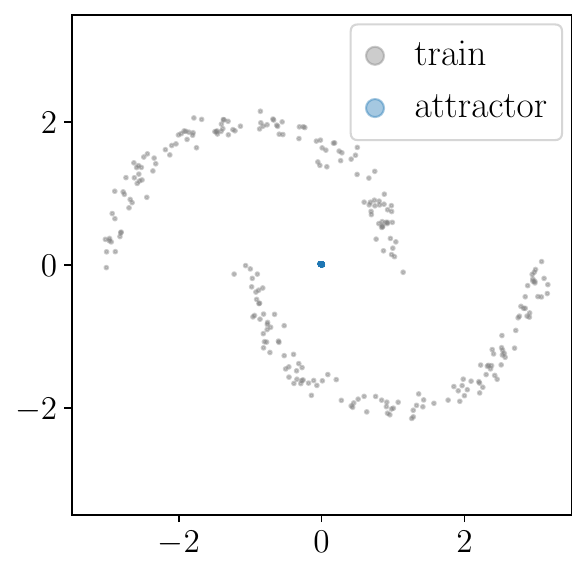}};
        \node at (3.5,0.0){\includegraphics[width=0.3\linewidth]{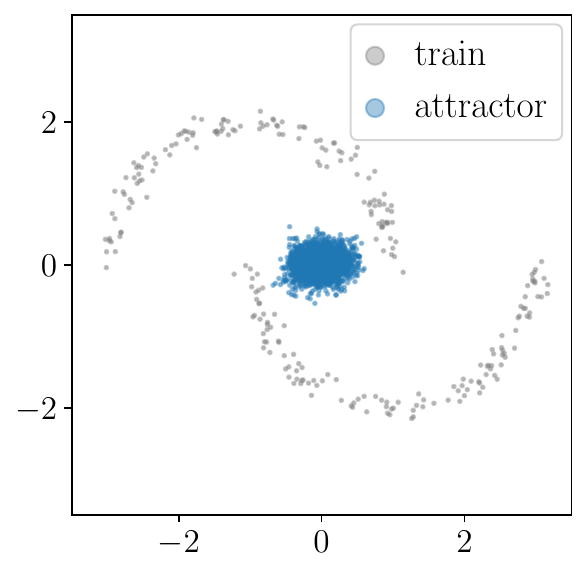}};
    \end{tikzpicture}    
    \vspace{-5mm}
    \caption{Attractors of deterministic MoE/I-IFS. Left: determinisctic sampling. Right: stochastic sampling.}
    \label{supp-fig:moe-det-attractors}
\end{figure*}

\end{document}